\pdfoutput=1
\documentclass[11pt]{article}

\usepackage[letterpaper,margin=1in]{geometry}
\usepackage[numbers,sort&compress]{natbib}
\usepackage{lmodern}
\usepackage[utf8]{inputenc}
\usepackage[T1]{fontenc}
\usepackage{hyperref}
\usepackage{url}
\usepackage{booktabs}
\usepackage{longtable}
\usepackage{amsfonts}
\usepackage{nicefrac}
\usepackage{microtype}
\usepackage{xcolor}
\usepackage{amsmath,amssymb,amsthm,mathtools,bm}
\usepackage{graphicx}
\usepackage{wrapfig}

\hypersetup{colorlinks=true,linkcolor=blue!50!black,citecolor=blue!50!black,urlcolor=blue!50!black}
\theoremstyle{plain}
\newtheorem{theorem}{Theorem}[section]
\newtheorem{proposition}[theorem]{Proposition}

\newtheorem{corollary}[theorem]{Corollary}

\theoremstyle{definition}

\theoremstyle{remark}

\newcommand{\R}{\mathbb{R}}
\newcommand{\E}{\mathbb{E}}

\newcommand{\tr}{\operatorname{tr}}
\newcommand{\rank}{\operatorname{rank}}
\newcommand{\diag}{\operatorname{diag}}
\newcommand{\vecop}{\operatorname{vec}}
\newcommand{\Cov}{\operatorname{Cov}}
\newcommand{\Var}{\operatorname{Var}}
\newcommand{\msign}{\operatorname{msgn}}

\newcommand{\op}{\mathrm{op}}

\title{Muon Under Gradient Noise and the Limits of Orthogonalization Near Optima}

\author{%
  Xiaohui Xie\\[2pt]
  Department of Computer Science\\
  University of California, Irvine\\
  \texttt{xhx@uci.edu}
}
\date{}

\begin{document}
\maketitle
\begin{abstract}
Muon replaces the momentum buffer of each weight matrix by its orthogonal
polar factor. We ask what this orthogonalization does near an optimum, where
minibatch noise dominates the gradient. Under a Gaussian noise model, the
expected Muon update becomes a scaled gradient step, so a linear method with a
suitably matched learning rate reproduces Muon's first-order mean response.
The stochastic update is a different matter: after the response is matched,
Muon retains a nonlinear residual that is uncorrelated with the input noise
and contributes additional covariance. A Hermite expansion shows how momentum
acts on this residual. Its higher-order components decorrelate faster than
the linear component, so momentum suppresses the residual's accumulated
covariance relative to the linear part, but never eliminates it. In a local
quadratic surrogate that evaluates the residual on the stationary noise
buffer, the residual adds stationary covariance and raises the stationary
loss floor at every stable step size, while leaving the contraction dynamics
unchanged. Simulations of the full nonlinear recursion on quadratics and
measurements on frozen transformer gradients support each step of this
picture. Together, the results make a theoretical case for replacing
orthogonalization by response-matched momentum SGD once optimization becomes
noise-dominated.
\end{abstract}

\section{Introduction}
Muon \citep{jordan2024muon,Liuetal2025} updates each weight matrix with the
orthogonal polar factor of its momentum buffer: if the buffer has singular
value decomposition $U\Lambda V^\top$, the update direction is $UV^\top$.
Every singular direction of the buffer then moves with unit weight. When the
gradient carries signal in many directions, this equalization is the purpose
of the method. Near an optimum the situation changes. The mean gradient
becomes small, the minibatch gradient is dominated by sampling noise, and the
polar map spends its effect on noise. This paper studies what
orthogonalization does in that regime.

We organize the analysis around four questions, each answered by one
section.
\begin{enumerate}
\item \emph{What does Muon do in expectation near a noise-dominated optimum?}
Gaussian noise smooths the polar map. For a small mean gradient, the
expected update is proportional to that gradient, with a constant that
depends only on the matrix shape and the noise scale. In the mean, Muon is
therefore a gradient step (Section~\ref{sec:regimes}).
\item \emph{After matching that mean response, what fluctuation remains?}
A linear method can be calibrated to have the same local response as Muon.
A Gaussian projection shows that Muon's update equals this calibrated linear
update plus a nonlinear residual. The residual is uncorrelated with the input
noise, adds no first-order response, and contributes positive semidefinite
covariance (Section~\ref{sec:preliminaries}).
\item \emph{What does momentum do to this residual?} Momentum correlates
successive inputs to the polar map. A Hermite expansion sorts the update by
how fast it decorrelates: the linear part retains the buffer's correlation,
whereas the residual's components decorrelate faster. Momentum therefore
suppresses the residual's accumulated covariance relative to the linear
part, but cannot eliminate it (Section~\ref{sec:momentum}).
\item \emph{Does the residual affect optimization?} In a local quadratic
surrogate that keeps the full linear feedback but evaluates the residual on
the stationary noise buffer, the residual changes no contraction mode but
adds stationary covariance, so it raises the stationary loss at every stable
step size (Section~\ref{sec:local-consequences}).
\end{enumerate}
Section~\ref{sec:experiments} tests the four steps in turn, with Monte Carlo
estimates, simulations of the full nonlinear recursion on quadratics, and
measurements on frozen transformer gradients.

These answers suggest a late-stage change of optimizer. Once a matrix's
gradient is noise-dominated, Muon's nonlinearity no longer changes the local
mean dynamics but still adds fluctuation, and response-matched momentum SGD,
which keeps the buffer and rescales the learning rate, removes that
fluctuation in the surrogate. The theory proves the covariance and
surrogate-loss statements under explicit assumptions; it motivates, but does
not validate, a practical switching rule: detecting the noise-dominated regime during
training, choosing a threshold, and testing the switch in full training runs
remain open.

\paragraph{Contributions.}
\begin{enumerate}
\item \textbf{Muon is linear in the mean near a noisy optimum.} Under
isotropic Gaussian noise, the expected polar update is a gradient step with
an explicit rate, with global bounds on the cubic remainder and the
Jacobian deviation (Proposition~\ref{prop:smoothing}).
\item \textbf{Orthogonalization leaves a residual after the response is
matched.} For any nonlinear update and Gaussian input, the update splits into
a response-matched affine comparator and an orthogonal residual with positive
semidefinite covariance; for the polar map at zero signal, the one-step
covariance excess is explicit (Theorem~\ref{thm:response},
equation~\eqref{eq:one-step-ratio}).
\item \textbf{Momentum attenuates the residual but does not remove it.} For
a stationary EMA buffer we give the exact covariance accumulated over any
window, degree by degree in the Hermite expansion, and explicit ratios to
matched momentum SGD at zero signal, given by series in the Hermite weights
of the polar map, which decrease with momentum and stay
above one (Theorem~\ref{thm:momentum}, Corollary~\ref{cor:zero-signal}); the
results extend to general linear filters and Newton--Schulz maps
(Proposition~\ref{prop:momentum-filter}).
\item \textbf{In a local surrogate, the residual raises the stationary loss
floor.} In the frozen-residual quadratic surrogate, the residual adds a
positive semidefinite term to the stationary covariance, the loss gap is
strict, and switching to the response-matched comparator removes it
(Theorem~\ref{thm:switch-surrogate}). At zero signal the finite-step loss
ratio is exact, exceeds one on the whole stability interval, and identifies
where heavy momentum stops reducing the penalty
(Theorem~\ref{thm:momentum-finite-step}).
\item \textbf{The steps are tested numerically.} Monte Carlo estimates
confirm the mean response and the Hermite constants; simulations of the full
nonlinear recursion reproduce the predicted loss ratios at small steps and
the ordering over a wide range of step sizes; frozen transformer gradients
show the residual under real, anisotropic noise and the reference
implementation.
\end{enumerate}

\section{Related work}
\paragraph{Muon geometry and convergence.}
Matrix-norm and trust-region views explain why Muon's update direction is
natural for weight matrices
\citep{bernstein2024oldoptimizernewnorm,BernsteinNewhouse2025Modular,Kovalev2025Understanding,Pethicketal2025LMO},
convergence analyses bound the rate at which Muon reaches stationary
points, including with inexact polar factors
\citep{LiHong2025Note,shen2025convergenceanalysismuon,Changetal2025Beyond,KimOh2026Convergence},
and \citet{Amseletal2026} design polynomial approximations of the polar
factor for practical use.
These works address how fast Muon approaches an optimum. We study the
regime after that, where noise dominates, and ask how large Muon's
stationary fluctuations are relative to a linear method with the same local
dynamics.

\paragraph{Momentum and noise in Muon.}
\citet{li2026denoise} interpret momentum as spectral denoising that improves
the singular subspaces before orthogonalization, and
\citet{choudhury2026nesterov} analyze Nesterov Muon under heavy-tailed noise.
Our focus is complementary: we decompose the noise that remains
\emph{after} orthogonalization, relative to the response-matched linear
update, and quantify exactly how much of it momentum removes over a window.

\paragraph{Sign methods, smoothing, and projection tools.}
Sign and normalized methods change how signal and noise enter an update
\citep{bernstein2018signsgd,pmlr-v80-balles18a}. Stochastic differential
equation approximations of SGD \citep{li2017stochastic,mandt2017sgd} show,
for sign-based methods, that noise smooths the sign in the drift
\citep{Compagnonietal2025SDE}; finite-step
fluctuations of momentum SGD are analyzed by \citet{liu2021finite}. These
settings are scalar or coordinatewise and do not treat momentum-correlated
inputs to a matrix nonlinearity. Our response identity is a Gaussian
projection of Bussgang--Stein type \citep{demir2021bussgang}; we use it as a
tool. The contributions are its application to matrix orthogonalization,
the Hermite analysis of temporal accumulation under momentum, the
closed-form constants for the polar map, and the exact loss comparison in
the surrogate.
\section{Setup and the mean update under noise}\label{sec:regimes}
\paragraph{Notation.}
Let $W\in\R^{m\times n}$ be a weight matrix and $F(W)$ a differentiable
loss. For a matrix $G$, $\|G\|_F$ is the Frobenius norm, $\|G\|_*$ the
nuclear norm (the sum of the singular values), and
$\langle A,B\rangle=\tr(A^\top B)$. Write $d=mn$ and $r=\min(m,n)$. If
$G=U\Lambda V^\top$ is the compact singular value decomposition over the
positive singular values, the polar factor of $G$ is
\begin{equation}
 P(G)=\msign(G)=UV^\top,\qquad P(0)=0.
 \label{eq:polar-def}
\end{equation}
It keeps the singular vectors of $G$ and sets every nonzero singular value
to one, so $\|P(G)\|_F^2=\rank G\le r$ and $\langle G,P(G)\rangle=\|G\|_*$.
The map $P$ is odd, invariant under positive scaling, and equivariant under
orthogonal transformations: $P(AGD^\top)=AP(G)D^\top$ for $A\in O(m)$ and
$D\in O(n)$. For covariance calculations we vectorize matrices with
$\vecop:\R^{m\times n}\to\R^d$, write lowercase letters for vectorized
quantities (for example $y=\vecop Y$), and write $I_d$ for the identity on
$\R^d$.

\paragraph{Muon with and without momentum.}
Let $G_k$ be the stochastic gradient at step $k$ and $\eta_M>0$ the Muon
step size. Muon without momentum is
\begin{equation}
 W_{k+1}=W_k-\eta_M P(G_k).
 \label{eq:muon-def}
\end{equation}
Muon with momentum applies the polar map to an exponential moving average
(EMA) of the gradients,
\begin{equation}
 B_k=\beta B_{k-1}+(1-\beta)G_k,\qquad
 W_{k+1}=W_k-\eta_M P(B_k),\qquad 0\le\beta<1,
 \label{eq:muon-momentum-def}
\end{equation}
which reduces to \eqref{eq:muon-def} at $\beta=0$. Momentum SGD with the
same buffer and step size $\eta_L$ is $W_{k+1}=W_k-\eta_LB_k$. The reference
implementation of Muon uses a Nesterov form of the buffer and a five-step
Newton--Schulz approximation of $P$ \citep{jordan2024muon}; the theory is
developed for \eqref{eq:muon-momentum-def} and the exact polar map, and
Proposition~\ref{prop:momentum-filter} extends it to the reference
implementation.

\paragraph{Noise model.}
Without noise, $P(\nabla F)$ has unit singular values however small the
gradient is, so the Muon step does not shrink near a minimizer. Minibatch
noise changes this. Near a minimizer we model the stochastic gradient as
\begin{equation}
 G_k=Y+\sigma_gZ_k,
 \label{eq:noise-model}
\end{equation}
where $Y$ is the mean gradient, $\sigma_g>0$ the entrywise gradient-noise
scale, and $Z_k$ are independent matrices with independent $N(0,1)$
entries. The isotropic model yields explicit constants.
Theorem~\ref{thm:response} allows an arbitrary nondegenerate Gaussian
covariance, and Section~\ref{sec:exp-network} measures the relevant
quantities on real gradient noise.

The polar map receives $G_k$ in Muon without momentum and the buffer $B_k$
in Muon with momentum; both are the mean gradient plus Gaussian noise, with
different noise scales. We therefore state the next result for a generic
input $Y+\sigma Z$, where $\sigma$ is the noise scale of the input: $\sigma=\sigma_g$ for
\eqref{eq:muon-def} and $\sigma=\sigma_\beta$, the stationary buffer noise
scale of Section~\ref{sec:momentum}, for \eqref{eq:muon-momentum-def}.
Define the mean update and the shape constant
\begin{equation}
 \Psi_\sigma(Y)=\E P(Y+\sigma Z),\qquad
 c=c_{m,n}=\frac{\E\|Z\|_*}{d}.
 \label{eq:c-def}
\end{equation}

\begin{proposition}[Gaussian smoothing of the polar map]\label{prop:smoothing}
$\Psi_\sigma$ is $C^\infty$ and odd, $D\Psi_\sigma(0)=(c/\sigma)I_d$, and for
every $Y\in\R^{m\times n}$,
\begin{equation}
 \Bigl\|\Psi_\sigma(Y)-\frac c\sigma Y\Bigr\|_F\le\sqrt{\frac r6}\,
 \frac{\|Y\|_F^3}{\sigma^3},\qquad
 \Bigl\|D\Psi_\sigma(Y)-\frac c\sigma I_d\Bigr\|_{F\to F}
 \le\sqrt{\frac{3r}2}\,\frac{\|Y\|_F^2}{\sigma^3},
 \label{eq:regimes-linear}
\end{equation}
where $\|\cdot\|_{F\to F}$ is the operator norm induced by the Frobenius
norm. Moreover $c_{1,1}=\sqrt{2/\pi}$, $c_{1,n}\sqrt n\to1$, and
$c_{N,N}\sqrt N\to8/(3\pi)$.
\end{proposition}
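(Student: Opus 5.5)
The plan is to write $\Psi_\sigma$ as a Gaussian convolution and move every derivative onto the Gaussian density. This works because $P$ is bounded ($\|P(G)\|_F\le\sqrt r$) and measurable. Concretely,
\[
\Psi_\sigma(Y)=\int P(G)\,\varphi_\sigma(G-Y)\,dG,
\]
where $\varphi_\sigma$ is the $N(0,\sigma^2 I_d)$ density. Differentiating under the integral (dominated convergence, since $P$ is bounded and the derivatives of $\varphi_\sigma$ are polynomial times Gaussian) shows that $\Psi_\sigma$ is $C^\infty$. The $k$-th derivative is
\[
D^k\Psi_\sigma(Y)[H_1,\dots,H_k]=\sigma^{-k}\,\E\bigl[P(Y+\sigma Z)\,\mathrm{He}_k(Z;H_1,\dots,H_k)\bigr],
\]
with $\mathrm{He}_k$ the multivariate Hermite polynomial. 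For example, $D\Psi_\sigma(Y)[H]=\sigma^{-1}\E[P(Y+\sigma Z)\langle Z,H\rangle]$. Oddness follows directly from $P(-G)=-P(G)$ and $Z\overset{d}{=}-Z$.

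For the Jacobian at $0$, scale invariance gives $D\Psi_\sigma(0)[H]=\sigma^{-1}\E[P(Z)\langle Z,H\rangle]$. The map $T:H\mapsto \E[P(Z)\langle Z,H\rangle]$ commutes with $H\mapsto AHD^\top$ for all $A\in O(m)$, $D\in O(n)$, by equivariance of $P$ and invariance of the law of $Z$. The representation $H\mapsto AHD^\top$ of $O(m)\times O(n)$ on $\R^{m\times n}\cong\R^m\otimes\R^n$ is irreducible (a tensor product of irreducibles, both of real type). So by Schur's lemma $T=\lambda I_d$, and taking traces gives $\lambda d=\E\langle Z,P(Z)\rangle=\E\|Z\|_*$, i.e.\ $\lambda=c$.

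For the remainder bounds, oddness of $\Psi_\sigma$ kills even derivatives at $0$, so $D^2\Psi_\sigma(0)=0$. Taylor's theorem with integral remainder then gives
\[
\Psi_\sigma(Y)-\frac c\sigma Y=\int_0^1\frac{(1-t)^2}{2}D^3\Psi_\sigma(tY)[Y,Y,Y]\,dt,
\]
and similarly $D\Psi_\sigma(Y)-D\Psi_\sigma(0)=\int_0^1(1-t)\,D^3\Psi_\sigma(tY)[Y,Y,\cdot]\,dt$. Both reduce to bounding $D^3\Psi_\sigma$ uniformly. Pairing with a unit test matrix $V$ and applying Cauchy--Schwarz,
\[
\bigl|\langle V,D^3\Psi_\sigma(Y')[H_1,H_2,H_3]\rangle\bigr|\le\sigma^{-3}\bigl(\E\langle V,P\rangle^2\bigr)^{1/2}\bigl(\E\,\mathrm{He}_3^2\bigr)^{1/2}.
\]
The first factor is at most $\|P\|_F\le\sqrt r$. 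For $\mathrm{He}_3(Z;Y,Y,Y)=\|Y\|^3h_3(u)$, with $u$ standard normal and $h_3(u)=u^3-3u$, the second moment is $3!\,\|Y\|_F^6=6\|Y\|_F^6$. Combining with $\int_0^1(1-t)^2/2\,dt=1/6$ gives $\sqrt r\cdot\sqrt6/6=\sqrt{r/6}$, which is exactly the first constant. For the Jacobian, with $H$ a unit matrix, one needs $\E\,\mathrm{He}_3(Z;Y,Y,H)^2\le 6\|Y\|^4$ (by polarization/Wick it equals $2\|Y\|^4+4\langle Y,H\rangle^2\|Y\|^2$, at most $6\|Y\|^4$). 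Together with $\int_0^1(1-t)\,dt=1/2$ this gives $\sqrt{6r}/2=\sqrt{3r/2}$, again matching. The main point to check carefully is this Wick computation and that Cauchy--Schwarz against an arbitrary unit $V$ yields the operator/Frobenius norms claimed.

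For the constants, note $c_{1,1}=\E|Z|=\sqrt{2/\pi}$. For $m=1$, $\|Z\|_*=\|Z\|_2$, which is a chi variable with mean $\sim\sqrt n$, so $c_{1,n}\sqrt n=\E\|Z\|_2/\sqrt n\to1$. For square $N\times N$, $\E\|Z\|_*/N^{3/2}\to\int x\,d\mu(x)$, where $\mu$ is the quarter-circle law of the singular values of $Z/\sqrt N$ on $[0,2]$. This gives $\int_0^2 x\,\frac{\sqrt{4-x^2}}{\pi}\,dx=\frac{8}{3\pi}$. Convergence of the expectation (not just almost-sure) follows by uniform integrability, since $\|Z\|_*/N^{3/2}\le\|Z\|_F/N$. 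I expect this random-matrix limit to be the step needing the most care in citation, though it is standard.
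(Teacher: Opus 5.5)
Your proposal is correct and follows the paper's proof essentially step for step. Both move all derivatives onto the Gaussian density, use a symmetry argument to get $D\Psi_\sigma(0)=(c/\sigma)I_d$, and use oddness to kill $D^2\Psi_\sigma(0)$. Both then bound $D^3\Psi_\sigma$ via Cauchy--Schwarz with $\|P\|_F\le\sqrt r$ and the degree-three Wick second moment (at most $3!$ times the product of squared norms), and feed that into Taylor's formula. Both obtain the square-matrix constant from the quarter-circle law plus a uniform-integrability argument.

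The one real variation is how you get the scalar Jacobian. You use Schur's lemma for the absolutely irreducible action of $O(m)\times O(n)$. The paper uses only independent row/column sign flips and permutations, which is more elementary and avoids the real-type irreducibility fact, but reaches the same conclusion.
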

The proof (Appendix~\ref{app:core-response}) differentiates the Gaussian
density rather than $P$, so rank deficiency causes no difficulty. The bounds
depend on $\|Y\|_F/\sigma$ at fixed shape: what matters is the signal
relative to the noise, not the size of the gradient alone. When this ratio
is small, the expected Muon step is a gradient step with learning rate
$\eta_Mc/\sigma$.

This is a statement about the mean drift only. It does not say that Muon
behaves like SGD: two updates can share a mean and differ in their
fluctuations. The next section asks what randomness remains once the local
response has been matched.
\section{The residual left after matching the response}\label{sec:preliminaries}\label{sec:smoothed}
We now compare one random Muon update with a linear update that has the same
local mean response. The calculation concerns a single input to the
nonlinearity and applies to Muon without momentum, or to any buffer whose
noise is Gaussian at a fixed time; temporal correlation enters in
Section~\ref{sec:momentum}.

\paragraph{Setting.}
Let $y\in\R^d$ be the signal and $g=y+\xi$ the input, with
$\xi\sim N(0,\Sigma)$ and $\Sigma\succ0$. Let $T:\R^d\to\R^p$ be the update
map; for Muon, $T(g)=\vecop P(\vecop^{-1}g)$. We state the results for a
general measurable $T$ because the argument does not use the structure of
the polar map; the same statements apply to Newton--Schulz approximations
and to anisotropic noise. Write
\begin{equation}
 \mu(y)=\E T(y+\xi),\qquad J(y)=D\mu(y)
 \label{eq:mu-J}
\end{equation}
for the mean update at signal $y$ and its derivative with respect to the
signal, holding the noise law fixed. We call $J(y)$ the \emph{local response
operator}.

\paragraph{Why match the response.}
A Muon step with input $y+\xi$ moves the parameters by $-\eta_MT(y+\xi)$,
whose mean is $-\eta_M\mu(y)$. If the signal is perturbed to $y+h$, the mean
step changes by $-\eta_M[\mu(y+h)-\mu(y)]=-\eta_MJ(y)h+o(\|h\|)$. A linear
update with step operator $A$ changes by $-Ah$. The two have the same local
deterministic dynamics exactly when $A=\eta_MJ(y)$. Comparing Muon and SGD
at equal numerical learning rates would not achieve this: differences in
their stationary behavior would then mix a difference in contraction with a
difference in fluctuation. Matching the response removes the first
difference, so that what remains isolates the effect of the nonlinearity.
This leads to the \emph{response-matched affine comparator}
\begin{equation}
 L_y(u)=\mu(y)+J(y)(u-y),\qquad
 L_y(y+\xi)=\mu(y)+J(y)\xi,
 \label{eq:affine-comparator}
\end{equation}
and the \emph{residual} $R_y=T(y+\xi)-L_y(y+\xi)$.

\begin{theorem}[Gaussian response and residual]\label{thm:response}
Let $\xi\sim N(0,\Sigma)$ in $\R^d$ with $\Sigma\succ0$, and let
$T:\R^d\to\R^p$ be measurable with at most polynomial growth. Then
$\mu(y)=\E T(y+\xi)$ is smooth, and with $J(y)=D\mu(y)$,
\begin{equation}
 \boxed{T(y+\xi)=\underbrace{\mu(y)+J(y)\xi}_{L_y(y+\xi)}+R_y,\qquad
 \E R_y=0,\quad\E[R_y\xi^\top]=0.}
 \label{eq:response-decomposition}
\end{equation}
In particular,
\begin{align}
 J(y)&=\E[(T(y+\xi)-\mu(y))\xi^\top]\Sigma^{-1},
 \label{eq:response-score}\\
 \Cov(T(y+\xi))&=J(y)\Sigma J(y)^\top+\Cov(R_y)
 \succeq J(y)\Sigma J(y)^\top.
 \label{eq:response-covariance-main}
\end{align}
\end{theorem}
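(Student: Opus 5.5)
The plan is to move all differentiation onto the Gaussian density, so that no regularity of $T$ is needed; this is the same device as in Proposition~\ref{prop:smoothing}. Write $\varphi_\Sigma$ for the $N(0,\Sigma)$ density. Then
\begin{equation*}
 \mu(y)=\int T(u)\,\varphi_\Sigma(u-y)\,du .
\end{equation*}
The kernel $\varphi_\Sigma(u-y)$ is smooth in $y$. Each of its $y$-derivatives is a polynomial in $u-y$ times the same Gaussian factor. Fix a compact set of $y$. On it, $T$ has polynomial growth, and a uniform Gaussian tail bound dominates every such derivative by an integrable function of $u$. Dominated convergence then allows differentiation under the integral to all orders, so $\mu$ is $C^\infty$.

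\textbf{Score identity.} The first derivative gives
\begin{equation*}
 \nabla_y\varphi_\Sigma(u-y)=\varphi_\Sigma(u-y)\,\Sigma^{-1}(u-y),
\end{equation*}
and therefore
\begin{equation*}
 J(y)=\E\bigl[T(y+\xi)\,\xi^\top\bigr]\Sigma^{-1}.
\end{equation*}
Since $\E\xi=0$, we may replace $T$ by $T-\mu(y)$ inside the expectation. This is \eqref{eq:response-score}.

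\textbf{Decomposition.} Define $R_y=T(y+\xi)-\mu(y)-J(y)\xi$. It has mean zero because $\E T(y+\xi)=\mu(y)$ and $\E\xi=0$. For the cross moment,
\begin{equation*}
 \E[R_y\xi^\top]=\E[T(y+\xi)\xi^\top]-J(y)\Sigma=0
\end{equation*}
by the score identity. So the residual is exactly the error of the $L^2$ projection of $T(y+\xi)$ onto affine functions of $\xi$. This step is Bussgang/Stein's lemma in disguise, with the derivative never falling on $T$.

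\textbf{Covariance.} Expand
\begin{equation*}
 \Cov(T)=\E\bigl[(J\xi+R_y)(J\xi+R_y)^\top\bigr].
\end{equation*}
The cross terms $J\,\E[\xi R_y^\top]$ and its transpose vanish by orthogonality. What remains is $J\Sigma J^\top+\Cov(R_y)$. The last term is positive semidefinite, which gives the Loewner inequality \eqref{eq:response-covariance-main}.

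\textbf{Main obstacle.} The only delicate step is justifying differentiation under the integral for a merely measurable, polynomially growing $T$. All moments must be finite; they are, because Gaussians have moments of every order. The derivative bounds must also be locally uniform in $y$. I would handle this with the estimate
\begin{equation*}
 \sup_{\|y-y_0\|\le1}|p(u-y)|\,\varphi_\Sigma(u-y)\le C\,\varphi_{2\Sigma}(u-y_0)
\end{equation*}
for any polynomial $p$. This holds after adjusting constants: complete the square, and note that polynomial factors are absorbed by enlarging the variance.
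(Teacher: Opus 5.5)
Your proposal is correct and follows the paper's proof essentially step for step. You differentiate the Gaussian density rather than $T$ to obtain $J(y)=\E[T(y+\xi)\xi^\top]\Sigma^{-1}$, read off $\E[R_y\xi^\top]=0$, and expand the covariance; your explicit local domination by $C\varphi_{2\Sigma}(u-y_0)$ supplies the differentiation-under-the-integral justification that the paper only asserts from polynomial growth.
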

The proof (Appendix~\ref{app:core-response}) differentiates the Gaussian
density, so $T$ need not be differentiable and $J(y)$ need not be square or
invertible; the identity is a Gaussian projection of Bussgang--Stein type
\citep{demir2021bussgang}. Because $R_y$ has zero mean and is uncorrelated with $\xi$, $L_y(y+\xi)$ is
the orthogonal projection of $T(y+\xi)$, in mean square, onto affine
functions of the noise, and it is the degree-one part of the Hermite
expansion used in Section~\ref{sec:momentum}. The covariance splits
accordingly. The first term of \eqref{eq:response-covariance-main} is the
covariance that the response-matched comparator must carry; the second is
the additional covariance produced by the nonlinearity. The residual has no
first-order response at $y$, but it is neither independent of the noise nor
small.

\paragraph{The polar residual at zero signal.}
For isotropic noise $\Sigma=\sigma^2I_d$ and $y=0$,
Proposition~\ref{prop:smoothing} gives $\mu(0)=0$ and $J(0)=(c/\sigma)I_d$,
so the comparator is plain SGD on the same input with learning rate
$\eta_L=\eta_Mc/\sigma$. Let $b=r/d$. Polar symmetry gives
\begin{equation}
 \vecop P(Z)=c\vecop Z+R,\qquad
 \Cov(R)=(b-c^2)I_d,\qquad
 \mathcal R_0=\frac b{c^2}=\frac{rd}{(\E\|Z\|_*)^2}>1.
 \label{eq:one-step-ratio}
\end{equation}
The ratio $\mathcal R_0$ is the one-step covariance of Muon divided by that
of response-matched SGD; $\mathcal R_0-1$ is the relative covariance added
by orthogonalization. It equals $\pi/2$ for scalars, tends to
$9\pi^2/64\approx1.39$ for large square matrices, and tends to one for long
vectors (Appendix~\ref{app:core-response}). For square matrices of every
size, orthogonalization thus adds roughly $40$--$57\%$ to the one-step
covariance. $\mathcal R_0$ is a covariance ratio, not yet a loss
ratio; Section~\ref{sec:local-consequences} supplies the connection.

The one-step decomposition does not determine the accumulated covariance.
Practical Muon uses momentum, which correlates successive inputs, and the
parameters respond to the covariance of many accumulated updates. How much
of the residual survives depends on how fast it decorrelates, which the
next section computes.
\section{How momentum acts on the residual}\label{sec:momentum}
From here on we study Muon with momentum \eqref{eq:muon-momentum-def}, with
an EMA buffer and the exact polar map; other filters and maps are treated at
the end of the section. The parameters integrate many updates, so the
relevant quantity is the covariance accumulated over a window of $K$ steps.

\paragraph{The stationary buffer.}
Fix $Y$ and $0\le\beta<1$, and let the gradients follow \eqref{eq:noise-model}.
The buffer and its stationary noise variance are
\begin{equation}
 B_k=\beta B_{k-1}+(1-\beta)(Y+\sigma_gZ_k),\qquad
 \sigma_\beta^2=\sigma_g^2\,\frac{1-\beta}{1+\beta}.
 \label{eq:momentum-stream}
\end{equation}
Start the buffer in its stationary law, $\vecop B_0\sim N(y,\sigma_\beta^2I_d)$
with $y=\vecop Y$. Then the standardized buffer noise
$X_k=(\vecop B_k-y)/\sigma_\beta$ is a stationary standard Gaussian vector
with $\Cov(X_k,X_{k+h})=\beta^{|h|}I_d$. The mean $\mu$, the response $J$,
and the comparator $L_y$ of Section~\ref{sec:preliminaries} are taken with
this buffer-noise law, $\xi\sim N(0,\sigma_\beta^2I_d)$.

\paragraph{Why a Hermite expansion.}
For a stationary centered stream $a_k$ with lag covariances
$\Gamma_h=\Cov(a_k,a_{k+h})$,
\begin{equation}
 \Cov\Bigl(\sum_{k=1}^K a_k\Bigr)
 =K\Gamma_0+\sum_{h=1}^{K-1}(K-h)(\Gamma_h+\Gamma_h^\top).
 \label{eq:sum-covariance}
\end{equation}
Theorem~\ref{thm:response} identifies the linear part and the residual at a
single time, but the accumulated covariance also depends on their lag
covariances, which a one-step calculation does not determine. For correlated
Gaussian inputs, the Hermite expansion supplies exactly this information.
Hermite polynomials are the orthogonal polynomials of the Gaussian measure,
and a component of total degree $\ell$ turns an input correlation $\rho$
into an output correlation $\rho^\ell$. The degree-one component is the
response-matched comparator; the components of degree $\ell\ge2$ make up the
residual and decorrelate as $\beta^{\ell|h|}$ rather than $\beta^{|h|}$. The
expansion thus explains, and quantifies, why momentum affects the linear
part and the residual differently.

\begin{theorem}[Accumulated covariance under EMA]\label{thm:momentum}
Under \eqref{eq:momentum-stream}, let $T:\R^d\to\R^p$ satisfy the
hypotheses of Theorem~\ref{thm:response}. For fixed $y$, expand the centered
update $f_y(x)=T(y+\sigma_\beta x)-\mu(y)=\sum_{\ell\ge1}f_{y,\ell}(x)$ into
orthogonal Hermite degrees under $X\sim N(0,I_d)$. Define the degree
covariances $C_\ell(y)=\E[f_{y,\ell}(X)f_{y,\ell}(X)^\top]\succeq0$ and, for
$K\ge1$, the window factor
\begin{equation}
 \tau_K(\rho)=1+2\sum_{h=1}^{K-1}(1-h/K)\rho^h.
 \label{eq:tau-window}
\end{equation}
Then $f_{y,1}(X)=\sigma_\beta J(y)X$, and
\begin{align}
 \Cov\!\left(\sum_{k=1}^K T(\vecop B_k)\right)
 &=K\sum_{\ell\ge1}\tau_K(\beta^\ell)C_\ell(y)\nonumber\\
 &=\Cov\!\left(\sum_{k=1}^K L_y(\vecop B_k)\right)
   +K\sum_{\ell\ge2}\tau_K(\beta^\ell)C_\ell(y).
 \label{eq:fixed-signal-cov}
\end{align}
The residual and degree-one processes are uncorrelated at every lag, and the
last term is positive semidefinite.
\end{theorem}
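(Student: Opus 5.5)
The approach is Mehler's formula for Hermite polynomials: it gives the lag covariances of each Hermite degree of the update, and \eqref{eq:sum-covariance} then sums them over the window.

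\emph{Setup.} Write $X_k=(\vecop B_k-y)/\sigma_\beta$, so that $T(\vecop B_k)-\mu(y)=f_y(X_k)$. Since $T$ has polynomial growth, $f_y\in L^2(N(0,I_d))$. It therefore expands in the orthonormal multivariate Hermite basis $\{H_\alpha\}$ as $f_y=\sum_\alpha a_\alpha H_\alpha$, with $a_\alpha\in\R^p$, and we group terms by total degree $|\alpha|=\ell$ into $f_{y,\ell}$. The degree-zero term is $\E f_y(X)=0$. The degree-one term is $\sum_i \E[f_y(X)X_i]\,X_i=\E[f_y(X)X^\top]X$. Rescaling \eqref{eq:response-score} with $\Sigma=\sigma_\beta^2I_d$ gives
\[
\E[f_y(X)X^\top]=\sigma_\beta^{-1}\E[(T(y+\xi)-\mu(y))\xi^\top]=\sigma_\beta J(y),
\]
so $f_{y,1}(X)=\sigma_\beta J(y)X$. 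Because $\vecop B_k=y+\sigma_\beta X_k$, this means $L_y(\vecop B_k)-\mu(y)=f_{y,1}(X_k)$. Hence the degree-one process is exactly the centered comparator stream, and $\sum_{\ell\ge2}f_{y,\ell}(X_k)$ is the residual $R_y$ evaluated along the stream.

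\emph{Lag covariances.} The pair $(X_k,X_{k+h})$ is jointly Gaussian with standard marginals and cross-covariance $\rho I_d$, where $\rho=\beta^{|h|}$. Because the coordinates are independent, the multivariate Mehler formula factorizes coordinatewise. This gives $\E[H_\alpha(X_k)H_{\alpha'}(X_{k+h})]=\delta_{\alpha\alpha'}\rho^{|\alpha|}$. Consequently
\[
\E[f_{y,\ell}(X_k)f_{y,\ell'}(X_{k+h})^\top]=\delta_{\ell\ell'}\,\beta^{\ell|h|}C_\ell(y),
\]
with $C_\ell(y)=\sum_{|\alpha|=\ell}a_\alpha a_\alpha^\top$. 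This identity has two consequences. First, different degrees are uncorrelated at every lag, so in particular the residual and the degree-one process are uncorrelated at every lag. Second, the lag-$h$ covariance of the full centered stream is $\Gamma_h=\sum_{\ell\ge1}\beta^{\ell|h|}C_\ell(y)$, which is symmetric.

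\emph{Summation and convergence.} Substituting $\Gamma_h$ into \eqref{eq:sum-covariance} and collecting the weights $K$ and $2(K-h)$ yields $K\sum_\ell\tau_K(\beta^\ell)C_\ell(y)$. The degree-one term equals $\Cov(\sum_k L_y(\vecop B_k))$. Each $\tau_K(\beta^\ell)$ is a positive combination of nonnegative terms, so the remaining sum over $\ell\ge2$ is positive semidefinite.

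The main obstacle is justifying the interchange of the infinite Hermite sum with the expectations. The degree decomposition converges in $L^2$, so the partial sums $S_N(X_k)$ converge in $L^2$ for each fixed $k$. The cross-moments $\E[S_N(X_k)S_N(X_{k+h})^\top]$ then converge by Cauchy--Schwarz. The series $\sum_\ell\beta^{\ell|h|}C_\ell$ converges absolutely, because $\sum_\ell\tr C_\ell=\E\|f_y\|^2<\infty$ and $|\beta^{\ell|h|}|\le1$. Only finitely many lags enter for fixed $K$, so no further issue arises.
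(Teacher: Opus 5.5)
Your proposal is correct and follows the paper's proof essentially step for step. The score identity of Theorem~\ref{thm:response} identifies the degree-one component as $\sigma_\beta J(y)X$, and the Hermite correlation identity $\E[H_\alpha(X)H_\gamma(X')]=\delta_{\alpha\gamma}\rho^{|\alpha|}$ makes the lag covariances degree-diagonal, equal to $\beta^{\ell|h|}C_\ell(y)$; the paper derives that identity from the Hermite generating function rather than citing Mehler's formula. Summing through \eqref{eq:sum-covariance}, with an $L^2$/Cauchy--Schwarz passage to the infinite expansion, then gives the result exactly as you describe.
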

The proof is in Appendix~\ref{app:momentum-proof}. The window factor
$\tau_K(\rho)$ measures how much a component with lag correlation $\rho$ is
amplified by accumulation over $K$ steps. For $\beta>0$ and $K\ge2$,
$\tau_K(\beta^\ell)<\tau_K(\beta)$ for every $\ell\ge2$: accumulation
amplifies the linear part more than any residual degree, so the residual's
share of the accumulated covariance falls. It does not vanish, because
every $\tau_K$ is positive. At nonzero $y$ even degrees can occur and the
$C_\ell(y)$ need not be multiples of the identity, which is why the theorem
is stated for operators.

\paragraph{Zero signal: explicit ratios.}
At $Y=0$ the polar map is odd, so even degrees vanish and the residual
starts at degree three. Orthogonal equivariance makes each degree covariance
a multiple of the identity, $C_\ell(0)=w_\ell I_d$ (by scale invariance,
$f_0(x)=\vecop P(\vecop^{-1}x)$ does not depend on $\sigma_\beta$), with Hermite
weights $w_\ell\ge0$, $w_1=c^2$, and $\sum_{\ell\text{ odd}}w_\ell=b$. To
compare Muon with momentum SGD at equal response, scale the Muon step by
$\sigma_\beta/c$ and consider
\begin{equation}
 U_k=(\sigma_\beta/c)P(B_k),\qquad V_k=B_k,
 \label{eq:momentum-calibration}
\end{equation}
whose mean responses at zero signal both equal $I_d$; $V_k$ is the step of
matched momentum SGD.
\begin{corollary}[Zero-signal covariance ratios]\label{cor:momentum-longrun}\label{cor:zero-signal}
At $Y=0$, for every $K\ge1$,
\begin{align}
 \Cov\!\left(\vecop\sum_{k=1}^K U_k\right)
 &=\mathcal R_{\beta,K}\Cov\!\left(\vecop\sum_{k=1}^K V_k\right),
 \label{eq:momentum-finite-cov}\\
 \mathcal R_{\beta,K}
 &=1+\sum_{\substack{\ell\ge3\\\ell\text{ odd}}}
 \frac{w_\ell}{c^2}\frac{\tau_K(\beta^\ell)}{\tau_K(\beta)},
 \label{eq:momentum-finite}\\
 1<\mathcal R_{\beta,K}&\le
 1+(\mathcal R_0-1)\frac{\tau_K(\beta^3)}{\tau_K(\beta)}
 \le\mathcal R_0.
 \label{eq:momentum-finite-bound}
\end{align}
For fixed $\beta<1$ the long-window limit exists,
\begin{equation}
 \mathcal R_\beta=\lim_{K\to\infty}\mathcal R_{\beta,K}
 =1+\sum_{\substack{\ell\ge3\\\ell\text{ odd}}}
 \frac{w_\ell}{c^2}\frac{1-\beta}{1+\beta}
 \frac{1+\beta^\ell}{1-\beta^\ell},\qquad
 \lim_{\beta\uparrow1}\mathcal R_\beta
 \le1+\frac{\mathcal R_0-1}{3}.
 \label{eq:momentum-infinite}
\end{equation}
It decreases strictly with $\beta$ and satisfies
\begin{equation}
 1<\mathcal R_\beta\le
 1+\frac{1-\beta+\beta^2}{1+\beta+\beta^2}(\mathcal R_0-1).
 \label{eq:momentum-third}
\end{equation}
Moreover $K^{-1}\Cov(\vecop\sum U_k)\to\sigma_g^2\mathcal R_\beta I_d$,
while the same limit for $V_k$ is $\sigma_g^2I_d$.
\end{corollary}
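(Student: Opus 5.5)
The plan is to specialize Theorem~\ref{thm:momentum} to $y=0$ and then reduce every remaining claim to elementary properties of the window factor $\tau_K$ and of its limit $h(\rho)=(1+\rho)/(1-\rho)$.

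\textbf{The ratio \eqref{eq:momentum-finite}.} At $Y=0$ the Muon map is $T=(\sigma_\beta/c)\vecop P$. Since $f_0$ does not depend on $\sigma_\beta$ and $C_\ell(0)=w_\ell I_d$, Theorem~\ref{thm:momentum} gives
\[
\Cov\Bigl(\vecop\sum_{k=1}^K U_k\Bigr)=\frac{\sigma_\beta^2}{c^2}\,K\sum_{\ell\text{ odd}}\tau_K(\beta^\ell)\,w_\ell\, I_d .
\]
For $V_k=B_k$ the map is linear, so only degree one survives, with covariance $\sigma_\beta^2 I_d$. Hence
\[
\Cov\Bigl(\vecop\sum_{k=1}^K V_k\Bigr)=\sigma_\beta^2 K\tau_K(\beta)I_d .
\]
Dividing and using $w_1=c^2$ gives \eqref{eq:momentum-finite}.

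\textbf{The bounds \eqref{eq:momentum-finite-bound}.} The coefficients $2(1-h/K)$ in $\tau_K$ are nonnegative, so $\tau_K$ is nondecreasing on $[0,1)$ and satisfies $\tau_K\ge1>0$. Therefore $\tau_K(\beta^\ell)\le\tau_K(\beta^3)\le\tau_K(\beta)$ for all odd $\ell\ge3$. Summing, and using $\sum_{\ell\ge3\text{ odd}}w_\ell=b-c^2=c^2(\mathcal R_0-1)$, gives the two upper bounds. For strictness: since $\mathcal R_0>1$ by \eqref{eq:one-step-ratio}, some $w_\ell$ with $\ell\ge3$ is positive, and every $\tau_K$ is positive, so $\mathcal R_{\beta,K}>1$.

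\textbf{The long-window limit \eqref{eq:momentum-infinite}.} For $|\rho|<1$, $\tau_K(\rho)\to h(\rho)$ by dominated convergence of the geometric series. To pass the limit inside the sum over $\ell$, I use that each term $\frac{w_\ell}{c^2}\frac{\tau_K(\beta^\ell)}{\tau_K(\beta)}$ is dominated by the summable weight $w_\ell/c^2$. Set $g_\ell(\beta)=h(\beta^\ell)/h(\beta)$, which is exactly the factor in \eqref{eq:momentum-infinite}.

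\textbf{Strict monotonicity in $\beta$.} I expect this to be the main obstacle, so I will argue through logarithmic derivatives. Since $(\log h)'(\rho)=2/(1-\rho^2)$, it suffices to show, for $\beta\in(0,1)$,
\[
\ell\beta^{\ell-1}/(1-\beta^{2\ell})<1/(1-\beta^2).
\]
This is equivalent to
\[
\ell\beta^{\ell-1}<\sum_{j=0}^{\ell-1}\beta^{2j}.
\]
That inequality is AM--GM on the $\ell$ terms $\beta^{2j}$: their geometric mean is $\beta^{\ell-1}$, and equality would force all terms equal, which fails for $\beta\in(0,1)$ and $\ell\ge3$. So each $g_\ell$ is strictly decreasing, and hence so is $\mathcal R_\beta$.

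\textbf{The bound \eqref{eq:momentum-third}.} Monotonicity gives $g_\ell\le g_3$ for $\ell\ge3$, and
\[
g_3(\beta)=\frac{(1-\beta)(1+\beta^3)}{(1+\beta)(1-\beta^3)}=\frac{1-\beta+\beta^2}{1+\beta+\beta^2}.
\]
Combined with $\sum_{\ell\ge3\text{ odd}}w_\ell/c^2=\mathcal R_0-1$, this yields \eqref{eq:momentum-third}. Strict positivity of $\mathcal R_\beta-1$ follows as before.

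\textbf{The limit $\beta\uparrow1$.} We have $g_\ell(\beta)\to1/\ell$, since $(1-\beta)/(1-\beta^\ell)\to1/\ell$ and $(1+\beta^\ell)/(1+\beta)\to1$. Dominated convergence with $g_\ell\le1$ then gives
\[
\lim_{\beta\uparrow1}\mathcal R_\beta=1+\sum_{\ell\ge3\text{ odd}}\frac{w_\ell}{\ell c^2}\le1+\frac{\mathcal R_0-1}{3}.
\]

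\textbf{The normalized covariances.} For the SGD comparator,
\[
K^{-1}\Cov\Bigl(\vecop\sum_{k=1}^K V_k\Bigr)=\sigma_\beta^2\tau_K(\beta)I_d\to\sigma_\beta^2\frac{1+\beta}{1-\beta}I_d=\sigma_g^2I_d
\]
by \eqref{eq:momentum-stream}. Multiplying by $\mathcal R_{\beta,K}\to\mathcal R_\beta$ gives the stated limit $\sigma_g^2\mathcal R_\beta I_d$ for $U_k$.
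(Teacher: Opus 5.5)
Your proof is correct and follows essentially the same route as the paper. You specialize the degree-by-degree lag covariances at $Y=0$, use the nonnegative coefficients of $\tau_K$ for the bounds, use dominated convergence for the long-window and $\beta\uparrow1$ limits, and use a logarithmic-derivative argument for monotonicity, where your AM--GM inequality $\ell\beta^{\ell-1}<\sum_{j=0}^{\ell-1}\beta^{2j}$ is exactly the paper's $\sinh(\ell x)>\ell\sinh x$ under $\beta=e^{-x}$ (and note that $g_\ell\le g_3$ follows from monotonicity of $h$ at $\beta^\ell\le\beta^3$, not from monotonicity of $g_\ell$ in $\beta$).
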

The three ratios answer different questions. $\mathcal R_0$ is the one-step
covariance ratio; $\mathcal R_{\beta,K}$ is the ratio of covariances
accumulated over $K$ momentum steps; and $\mathcal R_\beta$ is its
long-window limit, the ratio of long-run noise injected per step. All are
covariance ratios at matched response. The corollary has three
consequences. First, momentum helps only through accumulation:
$\mathcal R_{\beta,1}=\mathcal R_0$ for every $\beta$, for $0<\beta<1$ the ratio
$\mathcal R_{\beta,K}$ decreases strictly in $K$ toward $\mathcal R_\beta$, and for fixed $K$ it returns to
$\mathcal R_0$ as $\beta\uparrow1$ (Appendix~\ref{app:window-proof}).
Second, the long-window ratio decreases with $\beta$ but stays above one: at
$\beta=0.95$ it is $1.09$ for scalars and $1.08$ for $64\times64$ matrices,
against $\mathcal R_0=1.57$ and $1.39$. Third, even the strongest
attenuation, $\beta\uparrow1$ after $K\to\infty$, leaves a positive excess,
at most a third of the one-step excess, and for scalars exactly
$(\pi/2)\log2\approx1.09$. Momentum reduces the residual's contribution; it
does not remove it.

\paragraph{The reference implementation.}
The argument uses only the Gaussian lag structure of the buffer and the
symmetries of the map. It therefore extends to any stationary linear filter
of the gradients, including the Nesterov buffer of the reference
implementation, and to any odd, scale-invariant, orthogonally equivariant
map, including the five-step Newton--Schulz iteration (NS5).
Proposition~\ref{prop:momentum-filter} in Appendix~\ref{app:momentum-filter}
gives the corresponding ratios, which have the same form with the filter's
lag correlations and the map's own Hermite weights. The Nesterov
correlations carry an extra factor smaller than one, so every residual
degree decorrelates faster than under EMA at the same $\beta$; NS5 has its
own constants, which we estimate numerically.

Momentum thus lowers the covariance penalty of orthogonalization without
eliminating it. Whether the remaining covariance matters for optimization
depends on how it propagates through the parameter dynamics, which we
analyze next.
\section{Stationary loss in a frozen-residual surrogate}\label{sec:local-consequences}
Accumulated covariance affects the objective only through the parameter
dynamics, and near a minimizer those dynamics feed back on themselves: the
buffer moves the parameters, the parameters change the gradient, and the
gradient re-enters the nonlinear orthogonalizer. This feedback loop through
a nonlinear map is what makes the full Muon recursion hard to analyze
directly. We therefore introduce a surrogate that keeps the linear feedback
exactly and simplifies only the nonlinear residual.

\paragraph{The surrogate.}
Near a minimizer the loss is locally quadratic. Let $x_k\in\R^d$ be the
parameter relative to a fixed expansion point, with loss gradient $y+Hx$ for
a signal $y$ and curvature $H\succ0$, and let $(z_k)_{k\in\mathbb Z}$ be
i.i.d.\ $N(0,I_d)$ gradient innovations. Split the buffer as $y+\widetilde b_k$
into the fixed signal and a centered part, and define alongside it the
noise-only buffer $n_k$ built from the same innovations:
\begin{equation}
 \widetilde b_k=\beta\widetilde b_{k-1}+(1-\beta)(Hx_k+\sigma_gz_k),\qquad
 n_k=(1-\beta)\sigma_g\sum_{j\ge0}\beta^jz_{k-j}.
 \label{eq:feedback-buffer}
\end{equation}
The centered buffer $\widetilde b_k$ contains the parameter feedback $Hx_k$;
the noise-only buffer is stationary, $n_k\sim N(0,\sigma_\beta^2I_d)$. Define
$\mu(y)$ and $J(y)$ with this noise law, and evaluate the residual of
Theorem~\ref{thm:response} on the noise-only buffer:
\begin{equation}
 e_k=T(y+n_k)-\mu(y)-J(y)n_k.
 \label{eq:frozen-residual}
\end{equation}
The response-matched comparator updates
\begin{equation}
 x_{k+1}=x_k-\eta_M[\mu(y)+J(y)\widetilde b_k],
 \label{eq:local-linear-step}
\end{equation}
which is momentum SGD with rate $\eta_L$ when $J(y)=(\eta_L/\eta_M)I_d$. The
\emph{frozen-residual surrogate} of Muon adds $e_k$ inside the brackets.
It keeps the parameter feedback through the matched linear response, the
state-transition matrix of the comparator, and the stationary temporal
statistics of the residual given by Theorem~\ref{thm:momentum}. The only
approximation is where the residual is evaluated: Muon evaluates it on the
feedback-dependent buffer $y+\widetilde b_k$, the surrogate on the
noise-only buffer. The two coincide when the
feedback term is small relative to the noise. The results below are exact
for the surrogate; Section~\ref{sec:exp-chains} tests how well it represents
the full nonlinear recursion.

\begin{theorem}[Stationary loss of the frozen-residual surrogate]
\label{thm:switch-surrogate}
Let $T$ satisfy the hypotheses of Theorem~\ref{thm:response}. After
subtracting their common stationary mean, write the surrogate and the
response-matched comparator as
$q_{k+1}^{M}=Aq_k^{M}+E_zz_k+E_ee_k$ and $q_{k+1}^{L}=Aq_k^{L}+E_zz_k$,
where $q_k\in\R^{2d}$ collects parameter and buffer coordinates. Suppose $A$
is Schur stable (all eigenvalues have modulus less than one). Then
$\E[e_kz_j^\top]=0$ at every lag, and the stationary state covariances
satisfy
\begin{equation}
 S_M=S_L+S_e,\qquad S_e\succeq0,
 \label{eq:switch-stationary}
\end{equation}
where $S_e$ is the covariance of the stable recursion forced only by $E_ee_k$
(the residual $e_k$ is stationary and square-integrable, being a fixed
measurable function of the stationary Gaussian $n_k$ with $T$ of polynomial
growth).
With $\Pi=(I_d\ \ 0)$ selecting parameter coordinates, the stationary
expected quadratic losses differ by
\begin{equation}
 \tfrac12\tr(H\Pi S_M\Pi^\top)-\tfrac12\tr(H\Pi S_L\Pi^\top)
 =\tfrac12\tr(H\Pi S_e\Pi^\top)\ge0.
 \label{eq:switch-risk}
\end{equation}
For $\eta_M>0$ the gap is strict exactly when $\E\|e_k\|^2>0$, which holds
for the exact polar map at every fixed signal. If the surrogate is switched
to the comparator at a fixed finite time, retaining a square-integrable
state and using the same future innovations, its covariance converges to
$S_L$ and its expected quadratic loss to the comparator's stationary loss.
\end{theorem}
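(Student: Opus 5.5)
The proof proceeds in three steps: write the surrogate as a linear state-space system driven by two noise streams, show the streams are uncorrelated at every lag, and let the stationary covariance split additively.

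\textbf{State-space form.} Take $q_k=(x_k-\bar x,\ \widetilde b_{k-1}-\bar b)$, where $(\bar x,\bar b)$ is the common stationary mean. This mean solves $\mu(y)+J(y)\bar b=0$ and $\bar b=H\bar x$, and it exists because $I-A$ is invertible when $A$ is Schur stable. Both recursions are then of the stated form with the same $A$ and $E_z$. The matrix $E_e$ is nonzero only in the parameter block, where it equals $-\eta_M$, since $e_k$ enters the parameter update with coefficient $-\eta_M$.

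\textbf{Orthogonality of the two forcings.} The key observation is that $e_k$ is a function of $n_k$ alone, and $n_k$ is a linear functional of the innovations $\{z_{k-j}\}_{j\ge0}$. For $j>k$, $z_j$ is independent of $n_k$ and has mean zero, so $\E[e_kz_j^\top]=\E e_k\,\E z_j^\top=0$. For $j\le k$, the pair $(n_k,z_j)$ is jointly Gaussian, so we can write
\[
 z_j=\Cov(z_j,n_k)\,\sigma_\beta^{-2}\,n_k+\zeta,
\]
with $\zeta$ independent of $n_k$ and mean zero. Theorem~\ref{thm:response}, applied with $\xi=n_k$ and $\Sigma=\sigma_\beta^2I_d$, gives $\E[e_kn_k^\top]=0$ and $\E e_k=0$. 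Combining these yields $\E[e_kz_j^\top]=0$ for all $j,k$.

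\textbf{Stationary covariances and the loss gap.} Because $A$ is Schur stable, the stationary solutions are absolutely convergent sums,
\[
 q_k^M=\sum_{i\ge0}A^i\bigl(E_zz_{k-1-i}+E_ee_{k-1-i}\bigr),
\]
and similarly for $q_k^L$ without the $e$-terms. Convergence in $L^2$ follows from stationarity, square-integrability of $e_k$ (it is a function of polynomial growth of a Gaussian), and the geometric decay $\|A^i\|\le C\rho^i$ with $\rho<1$. When we expand $\Cov(q^M_k)$, every cross term has the form $A^iE_z\E[z_ae_b^\top]E_e^\top(A^{i'})^\top$ and vanishes by the orthogonality step. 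This leaves $S_M=S_L+S_e$, where $S_e=\Cov\bigl(\sum_iA^iE_ee_{k-1-i}\bigr)\succeq0$. The loss identity \eqref{eq:switch-risk} then follows by linearity of the trace, and the gap $\tfrac12\tr(H\Pi S_e\Pi^\top)$ is nonnegative since $H\succ0$. The main obstacle is strictness.

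\textbf{Strictness.} We need $\Pi S_e\Pi^\top\neq0$ whenever $\E\|e_k\|^2>0$. The plan is to use the $i=0$ contribution to the parameter coordinate, which is $-\eta_Me_{k-1}$. Write the parameter part of the $e$-forced state as $-\eta_M e_{k-1}+W$, where $W$ depends on $e_{k-2},e_{k-3},\dots$. Two routes are available:
\begin{itemize}
\item If $W$ is orthogonal to $e_{k-1}$, then $\Pi S_e\Pi^\top\succeq\eta_M^2\Cov(e_{k-1})\neq0$.
\item If not, we avoid the cross term. Because $A$ is stable, the spectral density $(I-Ae^{i\omega})^{-1}$ is invertible at every frequency, so the output covariance vanishes only if the input $E_ee$ has zero spectral measure. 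That would force $\eta_M^2\E\|e\|^2=0$.
\end{itemize}
The spectral route is the one we will use: $\Pi S_e\Pi^\top=\frac1{2\pi}\int\Pi(I-e^{i\omega}A)^{-1}E_e\,dF_e(\omega)\,E_e^\top(\cdots)^*\Pi^\top$, and $\Pi(I-e^{i\omega}A)^{-1}E_e$ is injective on $\R^d$ for each $\omega$.

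For the polar map, $\E\|e_k\|^2=0$ would make $P(y+n)$ affine in $n$ almost surely. This is impossible because $P$ is bounded ($\|P\|_F^2\le r$) and nonconstant, whereas a nonconstant affine function of a Gaussian is unbounded. Nor can $P$ be a.s.\ constant, since it is odd in the noise-dominated directions.

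\textbf{Switching.} After a switch at time $k_0$, the difference between the switched state and the stationary comparator obeys the homogeneous recursion $\delta_{k+1}=A\delta_k$. With a square-integrable initial state we get $\E\|\delta_k\|^2\le C\rho^{2(k-k_0)}\E\|\delta_{k_0}\|^2\to0$. The covariance therefore converges to $S_L$ (the cross terms are bounded by Cauchy–Schwarz), and the expected quadratic loss, being continuous in second moments, converges to the comparator's stationary loss.
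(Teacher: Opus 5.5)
Your proposal follows the paper's proof on every step except strictness. It uses the same centered state-space form with shared $A$ and $E_z$. It uses the same all-lag orthogonality: your Gaussian regression $z_j=\E[z_jn_k^\top]\sigma_\beta^{-2}n_k+\zeta$ is the paper's linear conditional mean $\E[z_j\mid n_k]$. It uses the same $L^2$ series with vanishing cross terms, and the same coupling $\delta_{k+1}=A\delta_k$ after the switch.

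\textbf{Strictness, paper's route.} The paper stays in the time domain. A zero gap with $H\succ0$ forces $x^e_k=0$ a.s.\ at every time. The buffer block then reduces to $b^e_k=\beta b^e_{k-1}$, stationarity with $\beta<1$ gives $b^e=0$, and the parameter row leaves $\eta_Me_k=0$.

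\textbf{Strictness, your route.} You pass to the spectral measure $F_e$ and rely on $G(\omega)=\Pi(I-e^{i\omega}A)^{-1}E_e$ being invertible for every $\omega$. That is true, but you only assert it. Invertibility of $I-e^{i\omega}A$ does not by itself give it, because restricting to parameter coordinates can in general lose input directions. What makes it hold is the block structure. Suppose $(I-zA)w=E_eu$ with $|z|=1$, $w=(w_1,w_2)$ and $\alpha=1-\beta$. The buffer row reads $(1-z\beta)w_2=z\alpha Hw_1$, so $w_1=0$ forces $w_2=0$. The parameter row then gives $\eta_Mu=0$. This is the paper's mechanism in frequency form.

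\textbf{What each route buys.} Once the injectivity step is written down, your route gives more than the paper's. $G$ is continuous on the circle, so $\sigma_{\min}(G(\omega))\ge s>0$, and hence $\tr(H\Pi S_e\Pi^\top)\ge\lambda_{\min}(H)\,s^2\,\E\|e_k\|^2$ (with $F_e$ normalized to total mass $\Cov(e_k)$). That is a quantitative version of the strict gap. The cost is invoking the spectral representation of a stationary vector process, which the paper's elementary argument avoids.

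\textbf{Polar map.} The phrase \emph{odd in the noise-dominated directions} does not show that the polar map is non-constant at a nonzero signal. Use, as the paper does, that $y+n$ has full support and that $P$ is continuous near a full-rank $G_0$ and near $-G_0$, where it takes the distinct values $\pm P(G_0)$.
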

The proof is in Appendix~\ref{app:switch-proof}. The two recursions share
the transition matrix $A$, so they have identical contraction modes; the
residual changes only the forcing. Because that forcing is uncorrelated with
the gradient innovations at every lag, its stationary covariance adds to
that of the comparator, and the loss gap equals the loss produced by the
residual alone. (Uncorrelatedness at a single lag would not suffice.) At
nonzero $y$ the common stationary mean may differ from the minimizer, and
\eqref{eq:switch-risk} compares the fluctuations around that shared bias.

\paragraph{Exact loss ratio at zero signal.}
At $y=0$ with the exact polar map, $\mu(0)=0$ and $J(0)=(c/\sigma_\beta)I_d$,
so the comparator is EMA momentum SGD with the matched rate
\begin{equation}
 \eta_L=\eta_M\,c/\sigma_\beta,
 \label{eq:matched-rate}
\end{equation}
and the surrogate is $x_{k+1}=x_k-\eta_L[\widetilde b_k+(\sigma_\beta/c)e_k]$.
In this case the loss ratio can be computed on the whole stability interval.
Let $v_\beta=(1-\beta)/(1+\beta)$ and, for $0\le q<\beta$ and
$0\le t<2/v_\beta$,
\begin{align}
 D_\beta(q,t)&=(1-q)(1-\beta q)+(1-\beta)tq,\nonumber\\
 L_\beta(q,t)&=1+
 \frac{2(\beta-q)\{t(1+\beta^2q)-(1+\beta)(1-\beta q)\}}
 {(1+\beta)^2D_\beta(q,t)},\qquad
 \mathcal R^{\rm lin}_\beta(t)=1+\sum_{\substack{\ell\ge3\\\ell\text{ odd}}}
 \frac{w_\ell}{c^2}L_\beta(\beta^\ell,t).
 \label{eq:finite-step-penalty}
\end{align}
Here $t$ plays the role of the effective step $\eta_L\lambda$ of a Hessian
mode with eigenvalue $\lambda$, $L_\beta(\beta^\ell,t)$ is the stationary
variance of that mode driven by the degree-$\ell$ residual relative to the
variance driven by the Gaussian buffer noise, and $\mathcal R^{\rm lin}_\beta(t)$
is the resulting per-mode loss penalty.
\begin{theorem}[Exact finite-step loss ratio of the surrogate]\label{thm:momentum-finite-step}
Let $y=0$, let $T$ be the exact polar map, $0<\beta<1$,
$H=U\diag(\lambda_i)U^\top\succ0$, and $0<\eta_L\lambda_{\max}<2/v_\beta$.
Write $\Sigma_L=\Pi S_L\Pi^\top$ and $\Sigma_M=\Pi S_M\Pi^\top$ for the
parameter blocks of the stationary covariances of matched momentum SGD and of
the surrogate. Then
\begin{equation}
 \Sigma_L=\eta_L\sigma_g^2(2H-\eta_Lv_\beta H^2)^{-1},\qquad
 \Sigma_M=U\diag\!\left(
 \frac{\eta_L\sigma_g^2\,\mathcal R^{\rm lin}_\beta(\eta_L\lambda_i)}
 {2\lambda_i-\eta_Lv_\beta\lambda_i^2}\right)U^\top.
 \label{eq:finite-step-covariance}
\end{equation}
The function $\mathcal R^{\rm lin}_\beta$ is continuous and strictly
increasing on $[0,2/v_\beta)$, with $\mathcal R^{\rm lin}_\beta(0)=\mathcal R_\beta$,
and it crosses $\mathcal R_0$ at a unique $t_*$ with
\begin{equation}
 \frac{(1+\beta)(1-\beta^4)}{1+\beta^5}\le t_*<1+\beta.
 \label{eq:finite-step-threshold}
\end{equation}
Consequently the ratio of stationary excess losses
$\tr(H\Sigma_M)/\tr(H\Sigma_L)$ is a weighted average of the values
$\mathcal R^{\rm lin}_\beta(\eta_L\lambda_i)$; it lies in
$[\mathcal R^{\rm lin}_\beta(\eta_L\lambda_{\min}),\mathcal R^{\rm lin}_\beta(\eta_L\lambda_{\max})]$,
is strictly greater than one, and tends to $\mathcal R_\beta$ as
$\eta_L\downarrow0$. For $\beta=0$ the ratio equals $\mathcal R_0$ at every
stable step.
\end{theorem}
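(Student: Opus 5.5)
The plan is to diagonalize and reduce everything to scalar two-dimensional linear recursions, one per Hessian mode, driven either by the Gaussian innovations or by a single Hermite degree of the residual. At $y=0$, Proposition~\ref{prop:smoothing} gives $\mu(0)=0$ and $J(0)=(c/\sigma_\beta)I_d$. The comparator is then EMA momentum SGD $x_{k+1}=x_k-\eta_L\widetilde b_k$, and the surrogate adds the forcing $-\eta_L(\sigma_\beta/c)e_k$.

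\textbf{Step 1: decouple the modes.} Rotate by $U$ in parameter space. Because the noise is isotropic and the polar map is orthogonally equivariant, the Hermite degree covariances are $C_\ell(0)=w_\ell I_d$ with lag covariances $w_\ell\beta^{\ell|h|}I_d$, and these are invariant under $U$. So each mode $i$ is a scalar recursion $(x,\widetilde b)$ with transition matrix
\[
A_\lambda=\begin{pmatrix}1&-\eta_L\\(1-\beta)\lambda&\beta\end{pmatrix}
\]
in suitable coordinates, with effective step $t=\eta_L\lambda$. Cross-mode terms drop because, by Theorem~\ref{thm:momentum}, different Hermite degrees and different coordinates are uncorrelated at every lag. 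Theorem~\ref{thm:switch-surrogate} already gives $S_M=S_L+S_e$. Degrees also add, since the degrees are mutually uncorrelated at all lags, so
\[
S_e=\sum_{\ell\ge3,\ \ell\ \mathrm{odd}} S_e^{(\ell)},
\]
where $S_e^{(\ell)}$ is the stationary response to a scalar stationary input with autocovariance $(\sigma_\beta^2/c^2)w_\ell\rho^{|h|}$, $\rho=\beta^\ell$.

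\textbf{Step 2: the Gaussian part.} This is the classical stationary variance of EMA heavy ball. I would solve the discrete Lyapunov equation for $A_\lambda$ with white forcing entering the buffer coordinate at scale $(1-\beta)\sigma_g$. The result should give $\Sigma_L=\eta_L\sigma_g^2/(2\lambda-\eta_Lv_\beta\lambda^2)$ per mode, and in matrix form the stated formula; stability holds exactly for $0<t<2/v_\beta$.

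\textbf{Step 3: the residual part, which is the main computational obstacle.} Model a forcing with autocovariance $\rho^{|h|}$ as an AR(1) process $a_{k+1}=\rho a_k+\sqrt{1-\rho^2}\,\epsilon_k$. Append it to the state and solve the resulting $3\times3$ Lyapunov equation. Equivalently, I would evaluate $\sum_h \rho^{|h|}\,\gamma_\lambda(h)$, where $\gamma_\lambda$ is the impulse-response autocorrelation of the parameter coordinate; this can be done by partial fractions in the two eigenvalues of $A_\lambda$ and then symmetrized. One must then divide by the Gaussian-driven variance, using $\sigma_\beta^2=\sigma_g^2v_\beta$, to obtain $L_\beta(\rho,t)$. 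As sanity checks I would verify $L_\beta(\beta,t)=1$, which must hold since degree one reproduces the buffer exactly. I would also check $L_\beta(\rho,0)=v_\beta(1+\rho)/(1-\rho)$, which matches Corollary~\ref{cor:zero-signal}. Together these give $\mathcal R^{\rm lin}_\beta(0)=\mathcal R_\beta$. At $\beta=0$ the only admissible $q$ is $0$, and each $L$ equals $1$ by the same identity, so the ratio reduces to $1+\sum w_\ell/c^2=b/c^2=\mathcal R_0$.

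\textbf{Step 4: monotonicity, the crossing, and the averaging.} The numerator of $L_\beta-1$ is linear in $t$ and $D_\beta$ is positive and affine in $t$. Hence $L_\beta(q,\cdot)$ is a Möbius function of $t$, and its derivative sign is that of a $t$-independent expression. I would show this expression is positive for $0\le q<\beta$, so each summand, and therefore $\mathcal R^{\rm lin}_\beta$, is strictly increasing. Continuity follows from uniform convergence, since $|L_\beta|$ is bounded on $[0,2/v_\beta)$ and $\sum w_\ell<\infty$.

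For the crossing, note $\mathcal R^{\rm lin}_\beta(t)=\mathcal R_0$ iff $\sum w_\ell(L_\beta(\beta^\ell,t)-1)=0$.
\begin{itemize}
\item At $t=1+\beta$ every $L_\beta(q,t)-1$ is strictly positive, because $t(1+\beta^2q)>(1+\beta)(1-\beta q)$; this gives the upper bound on $t_*$.
\item For the lower bound, the sign of $L_\beta(q,t)-1$ is that of $t-(1+\beta)(1-\beta q)/(1+\beta^2q)$. This threshold decreases in $q$, so for $t$ below its value at $q=\beta^3$ all terms are negative, giving $t_*\ge(1+\beta)(1-\beta^4)/(1+\beta^5)$.
\end{itemize}
Uniqueness follows from strict monotonicity.

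Finally, $\tr(H\Sigma_M)/\tr(H\Sigma_L)$ is a weighted average of $\mathcal R^{\rm lin}_\beta(\eta_L\lambda_i)$ with positive weights $\lambda_i\Sigma_{L,ii}$. This gives the bracketing interval. The ratio exceeds one because $\mathcal R^{\rm lin}_\beta\ge\mathcal R_\beta>1$, and it tends to $\mathcal R_\beta$ as $\eta_L\downarrow0$ by continuity.
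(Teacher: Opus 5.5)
The concrete problem is the mode transition matrix that your Steps 2 and 3 rely on. Your $A_\lambda$ has trace $1+\beta$ and determinant $\beta+(1-\beta)t$. The surrogate's modes have characteristic polynomial $\zeta^2-(1+\beta-(1-\beta)t)\zeta+\beta$, so no choice of coordinates makes the two agree. Your matrix describes a delayed-gradient scheme: it lets $\widetilde b_{k+1}$ see $x_k$ instead of the freshly updated $x_{k+1}=x_k-\eta_L\widetilde b_k$. That scheme is stable only for $0<t<1$, which contradicts the interval $0<t<2/v_\beta$ you then assert. Its Lyapunov solution also does not reproduce $\Sigma_L$: already at $\beta=0$ it gives per-mode variance $\eta_L^2\sigma_g^2(1+t)/[t(1-t)(2+t)]$ instead of $\eta_L^2\sigma_g^2/[t(2-t)]$. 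In your coordinates $(x_k,\widetilde b_k)$, where the white innovation does enter only the buffer, the correct matrix is $\begin{pmatrix}1&-\eta_L\\(1-\beta)\lambda&\beta-(1-\beta)t\end{pmatrix}$. The scaled residual $(\sigma_\beta/c)e_k$ then loads on both coordinates, through $(-\eta_L,\,-(1-\beta)t)^\top$. Alternatively, work in the paper's state $(x_k,\widetilde b_{k-1})$.

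With that fixed, the rest of your plan coincides with the paper's proof:
\begin{itemize}
\item mode decoupling from the isotropic lag covariances and additivity over Hermite degrees;
\item an AR(1) substitute with $q=\beta^\ell$ for each degree, normalized by the $q=\beta$ response;
\item the $t$-independent sign of $\partial_tL_\beta$;
\item the degree thresholds $t_\ell=(1+\beta)(1-\beta^{\ell+1})/(1+\beta^{\ell+2})$ for the crossing.
\end{itemize}
Two small additions are needed. State the intermediate value theorem for the existence of $t_*$, using $\mathcal R^{\rm lin}_\beta(0)=\mathcal R_\beta<\mathcal R_0<\mathcal R^{\rm lin}_\beta(1+\beta)$. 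Also note that your bound on $L_\beta$ must hold uniformly in $q\in[0,\beta]$ for the continuity argument.

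The real difference is how you obtain $L_\beta$, and that is exactly the step you leave undone. A $3\times3$ augmented Lyapunov solve or a partial-fraction sum would work, but it is heavy. The paper instead eliminates the buffer to get the scalar recursion $y_{k+1}=(1+\beta-(1-\beta)t)y_k-\beta y_{k-1}-\eta_Lu_k+\eta_L\beta u_{k-1}$. It closes this with three moments, $C=\E[y_{k+1}u_k]$, $V=\E y_k^2$ and $G=\E[y_{k+1}y_k]$, using causality to get $\E[y_ku_k]=qC$ and $\E[y_{k-1}u_k]=q^2C$. This yields $V(q)/V(\beta)=L_\beta(q,t)$ in a few lines, and your two sanity checks follow immediately from the closed form.
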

The proof is in Appendix~\ref{app:momentum-finite-step}. Three consequences
follow, all for the surrogate. First, the long-window covariance ratio
$\mathcal R_\beta$ of Section~\ref{sec:momentum} is exactly the small-step
loss ratio, so the covariance ratios become loss ratios.
Second, the loss ratio exceeds one at every stable step, so lowering the
learning rate cannot bring the loss ratio to one: at any rate, matched
momentum SGD reaches a lower stationary loss with the same contraction. Third, momentum
reduces the no-momentum penalty $\mathcal R_0$ only for modes with
$\eta_L\lambda_i<t_*$. Since $t_*<1+\beta$ while stability allows
$\eta_L\lambda_i$ up to $2/v_\beta$, modes with
$1+\beta\le\eta_L\lambda_i<2/v_\beta$ are stable but pay more than
$\mathcal R_0$. As $\beta\uparrow1$ the guaranteed range
$(1+\beta)(1-\beta^4)/(1+\beta^5)\sim4(1-\beta)$ shrinks while the stability
limit grows like $4/(1-\beta)$, so large effective steps with heavy momentum
lose the benefit of momentum long before they lose stability.

\paragraph{From the surrogate to a switching rule.}
The surrogate analysis establishes that, once the response is matched, the
nonlinear residual adds stationary covariance without improving the local
first-order dynamics, and that switching to the comparator, keeping the
momentum buffer, replacing $P(B_k)$ by $B_k$, and setting the rate by
\eqref{eq:matched-rate}, removes that covariance. It motivates a candidate
switching rule for training, whose use requires three conditions. The
gradient must be noise-dominated: \eqref{eq:regimes-linear} guarantees a
dominant linear response when $\sqrt r\,\|Y\|_F^2\ll c\,\sigma_\beta^2$, and
Section~\ref{sec:exp-mean} shows that the linear response holds well beyond
this bound. The response $J$ must be close to a multiple of
the identity, since momentum SGD has a scalar response;
Appendix~\ref{app:scalarity} quantifies this approximation. And the local
dynamics must be approximately stationary. To locate the regime we report
\begin{equation}
 \delta_\beta=\frac{\|Y\|_F}{\sigma_\beta\sqrt d},
 \label{eq:delta-beta}
\end{equation}
the ratio of the mean norm to the root-mean-square norm of the buffer noise.
Values much smaller than one indicate the small-signal regime of
Proposition~\ref{prop:smoothing}; values of order one mark a transition
region in which the linear approximation may be inaccurate. The theory does
not supply a universal threshold on $\delta_\beta$.
\section{Experiments}\label{sec:experiments}
The experiments test the four steps of the argument in order: the mean
response (Section~\ref{sec:exp-mean}), the residual under momentum
(Section~\ref{sec:exp-momentum}), its effect on the stationary loss of the
full nonlinear recursion (Section~\ref{sec:exp-chains}), and its presence
under real gradient noise (Section~\ref{sec:exp-network}). The first three
use synthetic Gaussian noise; the last uses stored minibatch gradients of a
small transformer. No experiment trains a network with a switched
optimizer. Methods and the full table of Gaussian constants are in
Appendix~\ref{app:main-methods}; intervals are 95\% intervals over the
stated batches or chains.

\subsection{The mean update is a scaled gradient step}\label{sec:exp-mean}
We first test whether the expected polar update follows the linear response
$(c/\sigma)Y$ of Proposition~\ref{prop:smoothing} at small signal. We estimate the mean update at
$\|Y\|_F=0.1\sigma$ by Monte Carlo and project it on $Y$. The measured slope
agrees with $c/\sigma$ within $0.8\%$ over a $32$-fold range of noise scales
and over matrix sizes $N=4$ to $64$, and the full mean matrix agrees with
$(c/\sigma)Y$ entry by entry (Figure~\ref{fig:exp-mean}).

\begin{figure}[t]
\centering
\includegraphics[width=\linewidth]{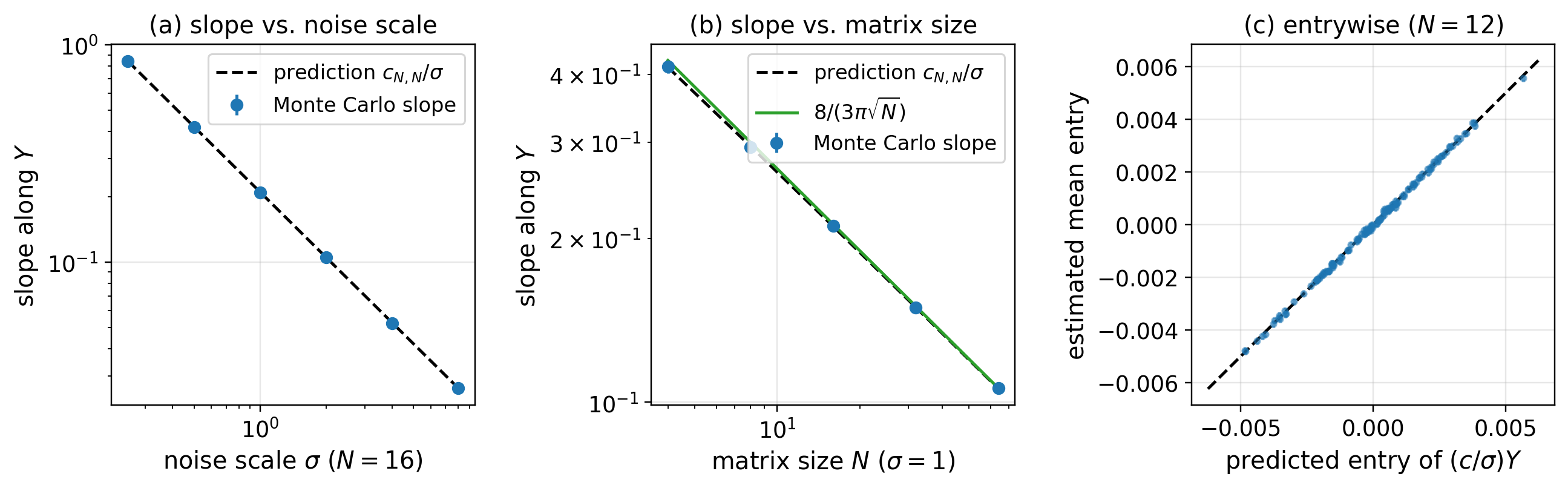}
\caption{At small signal the mean polar update is the predicted scaled
gradient step. Monte Carlo estimates of $\E P(Y+\sigma Z)$ at
$\|Y\|_F=0.1\sigma$ for $N\times N$ matrices: (a) slope along $Y$ against the
noise scale ($N=16$), (b) slope against the matrix size ($\sigma=1$), with the
asymptote $8/(3\pi\sqrt N)$, and (c) estimated against predicted entries
($N=12$, $\sigma=1$). Dashed lines are the independent predictions
$c_{N,N}/\sigma$; error bars are 95\% intervals.}
\label{fig:exp-mean}
\end{figure}

\subsection{Momentum attenuates the residual but does not remove it}\label{sec:exp-momentum}
Next we evaluate the covariance ratios of Corollary~\ref{cor:zero-signal}.
We estimate the residual
correlation between pairs of correlated Gaussian matrices, fit nonnegative
odd-degree Hermite weights, and substitute them into the corollary. For
scalars the ratio is known in closed form (Appendix~\ref{app:momentum-scalar}); the estimate is $1.0913\pm0.0009$
against the exact $1.0920$ at $\beta=0.95$. For $64\times64$ exact polar
updates the long-window ratio falls from $\mathcal R_0=1.394$ to
$\mathcal R_{0.95}=1.080$ (Figure~\ref{fig:exp-momentum}a). This removes
$80\%$ of the excess, more than the degree-three bound guarantees, because
the higher residual degrees decorrelate even faster; the ratio stays above
one at every momentum. The same reduction holds for NS5 and the Nesterov
filter, with their own constants (Figure~\ref{fig:exp-momentum}b). For the
combination used by the reference implementation, NS5 with the Nesterov
filter at $\beta=0.95$, the $64\times64$ ratio falls from $1.509$ to $1.067$.

\begin{figure}[t]
\centering
\includegraphics[width=\linewidth]{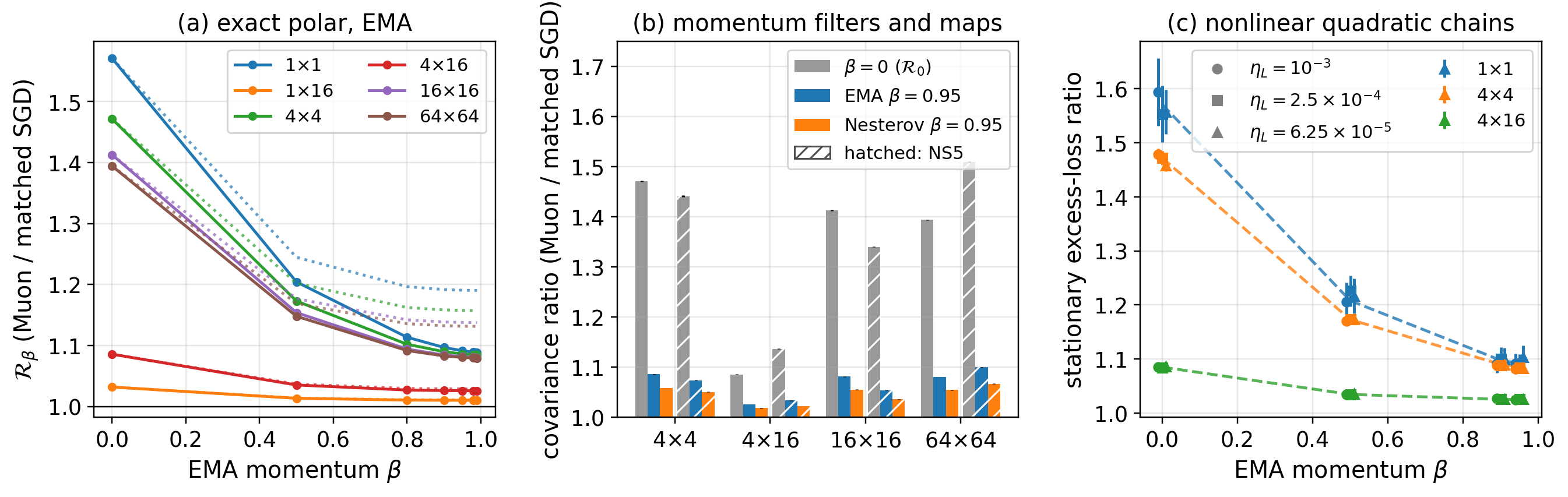}
\caption{Momentum substantially reduces the covariance excess of
orthogonalization but does not eliminate it, and the prediction carries over
to the stationary loss of the full nonlinear recursion. (a) Long-window
ratio $\mathcal R_\beta$ of Muon to matched momentum SGD for exact polar
updates under EMA (solid), with the degree-three bound
\eqref{eq:momentum-third} (dotted). (b) Covariance ratio without momentum
($\beta=0$, gray, equal to $\mathcal R_0$) and with EMA or Nesterov momentum at
$\beta=0.95$ (long-window limit), for exact polar (solid) and NS5 (hatched):
momentum removes most, but not all, of the excess for both maps and both
filters. (c) Stationary
excess-loss ratio of nonlinear exact-polar EMA Muon to matched momentum SGD
on quadratics, at three matched learning rates $\eta_L$, against the
prediction $\mathcal R_\beta$ (dashed).}
\label{fig:exp-momentum}
\end{figure}

\subsection{The residual raises the stationary loss of the full recursion}\label{sec:exp-chains}
The surrogate of Section~\ref{sec:local-consequences} evaluates the residual
on the noise-only buffer, whereas Muon evaluates it on the feedback-dependent
buffer. To test whether the predicted loss gap survives this difference, we
run nonlinear Muon and matched
momentum SGD on dense positive-definite quadratics with Hessian eigenvalues
in $[0.5,2]$, with the same filter, paired Gaussian innovations, 32 paired
chains, and three matched learning rates. Across 24 configurations (three
shapes, EMA and Nesterov filters, exact polar and NS5), every measured ratio
of stationary excess losses lies within $1.5\%$ of the predicted
$\mathcal R_\beta$, and at the smallest step 22 of the 24 intervals contain
it. Figure~\ref{fig:exp-momentum}c shows the exact-polar EMA chains;
Table~\ref{tab:momentum-chains} in Appendix~\ref{app:main-methods} lists all
24 configurations. At the smallest step, for $4\times4$ exact polar updates,
the ratio falls from $1.458\pm0.011$ without momentum to $1.084\pm0.005$ at EMA
$0.95$, against predictions $1.472$ and $1.086$.

\begin{table}[t]
\centering
\small
\begin{tabular}{rcccc}
\toprule
 & \multicolumn{2}{c}{exact polar} & \multicolumn{2}{c}{NS5}\\
\cmidrule(lr){2-3}\cmidrule(lr){4-5}
$\eta_L\lambda_{\max}$ & nonlinear Muon & surrogate & nonlinear Muon & surrogate\\
\midrule
$0.02$ & $1.081\pm0.001$ & $1.088$ & $1.055\pm0.001$ & $1.060$ \\
$0.05$ & $1.078\pm0.001$ & $1.094$ & $1.052\pm0.001$ & $1.065$ \\
$0.1$ & $1.074\pm0.001$ & $1.104$ & $1.049\pm0.001$ & $1.072$ \\
$0.2$ & $1.070\pm0.001$ & $1.123$ & $1.045\pm0.001$ & $1.085$ \\
$0.4$ & $1.064\pm0.002$ & $1.159$ & $1.041\pm0.002$ & $1.111$ \\
$0.8$ & $1.062\pm0.002$ & $1.224$ & $1.038\pm0.002$ & $1.159$ \\
$1.6$ & $1.066\pm0.003$ & $1.333$ & $1.042\pm0.003$ & $1.242$ \\
$3.2$ & $1.082\pm0.004$ & $1.504$ & $1.058\pm0.004$ & $1.379$ \\
$6.4$ & $1.128\pm0.007$ & $1.750$ & $1.109\pm0.008$ & $1.592$ \\
\midrule
$\mathcal R_{0.9}$ (small-step limit) & \multicolumn{2}{c}{$1.084$} & \multicolumn{2}{c}{$1.057$}\\
$\mathcal R_0$ (no momentum) & \multicolumn{2}{c}{$1.413$} & \multicolumn{2}{c}{$1.340$}\\
\bottomrule
\end{tabular}
\caption{The stationary-loss gap persists at every stable step size. Ratio
of stationary losses of nonlinear EMA Muon ($\beta=0.9$) to matched momentum
SGD on a $16\times16$ matrix quadratic, over a $320$-fold range of effective
steps (95\% intervals over 64 paired chains), with the frozen-residual
prediction of Theorem~\ref{thm:momentum-finite-step} (NS5: the same formula
with the NS5 Hermite weights). The measured ratio stays between $1.04$ and
$1.13$, above one and well below $\mathcal R_0$; the surrogate is accurate at
small steps and conservative at large ones.}
\label{tab:exp-steps}
\end{table}

The gap is not confined to small steps. On a $16\times16$ matrix quadratic
$\tfrac12\tr(E^\top PEQ)$ with EMA $\beta=0.9$ and 64 paired chains, the
stationary loss of nonlinear Muon exceeds that of matched momentum SGD at
every tested step, over a $320$-fold range of effective steps
$\eta_L\lambda_{\max}$ (Table~\ref{tab:exp-steps}). The ratio is
$1.081\pm0.001$ at the smallest step for exact polar, against
$\mathcal R_{0.9}=1.084$, and stays between $1.04$ and $1.13$ for both maps,
well below $\mathcal R_0$. As the step grows, the surrogate overpredicts the
ratio, so the frozen-residual approximation is conservative there, but the
ordering it predicts persists in the full recursion.

\subsection{Frozen transformer gradients}\label{sec:exp-network}
\begin{figure}[!t]
\centering
\includegraphics[width=\linewidth]{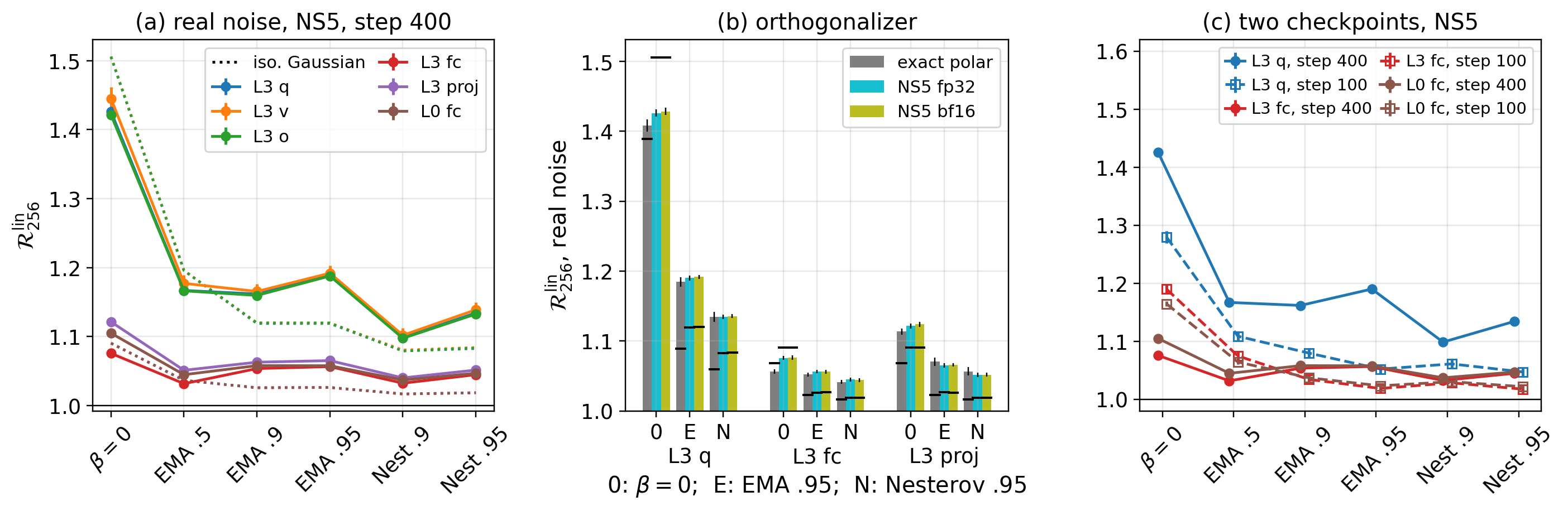}
\caption{The residual persists under real transformer gradient noise and in
the reference implementation, and momentum reduces it only partly.
Accumulated covariance ratio $\mathcal R^{\rm lin}_{256}$ for six hidden
matrices of a frozen transformer (effective batch 64). (a) Real noise
(solid) against isotropic Gaussian controls (dotted), NS5 in float32.
(b) Exact polar and NS5 in float32 and bfloat16 for three of the matrices at
$\beta=0$, EMA $0.95$, and Nesterov $0.95$; black lines mark the isotropic
controls. (c) The main
checkpoint (step 400, solid) and an earlier one (step 100, dashed). Error
bars are 95\% intervals conditional on the stored gradient pool.}
\label{fig:exp-network}
\end{figure}
Real gradient noise is anisotropic, non-Gaussian, and has a nonzero mean,
and the reference implementation uses NS5 in bfloat16 with a Nesterov
buffer. To measure the residual in this setting without the confound of a moving
network, we freeze a six-layer character-level transformer trained on Tiny
Shakespeare, store $8192$ minibatch gradients for six hidden matrices, and
resample stationary gradient streams with effective batch $64$. Step $400$
is the main checkpoint and step $100$ an earlier stage of training.

Let $M_k$ be the filtered matrix supplied to the orthogonalizer,
$m_k=\vecop M_k$, $T$ the vectorized orthogonalizer, and $J$ the response of
$\E T(m_k)$ to a persistent shift of the gradient mean with the noise law
held fixed. We report
\begin{equation}
 \mathcal R^{\rm lin}_K=
 \frac{\tr\Cov(\sum_{k=1}^K T(m_k))}
 {\tr\Cov(\sum_{k=1}^K Jm_k)},
 \label{eq:network-main-ratio}
\end{equation}
the accumulated covariance of the update divided by that of the
response-matched linear update $Jm_k$. A value of one would mean that the
orthogonalizer adds no accumulated covariance beyond its matched linear
response; values above one measure the remaining nonlinear fluctuation. For
zero-mean isotropic Gaussian noise under EMA with exact polar updates,
$\mathcal R^{\rm lin}_K=\mathcal R_{\beta,K}$, and an isotropic Gaussian
control run through the same pipeline reproduces the zero-signal theory
within $0.01$.

At step $400$ and $K=256$, the attention-matrix ratios fall from
$1.42$--$1.45$ without momentum to $1.19$ under EMA $0.95$ and $1.13$--$1.14$
under the reference Nesterov filter at $0.95$ (Figure~\ref{fig:exp-network}a).
Exact polar, NS5 in float32, and NS5 in bfloat16 agree within $0.03$
(Figure~\ref{fig:exp-network}b). Gaussian noise with the measured covariance
and zero mean gives ratios close to the isotropic prediction at high
momentum, and adding the measured mean reproduces the real-noise values
within $0.012$ for every layer at $\beta\ge0.9$ (Table~\ref{tab:network-ratios}).
For the attention matrices, the excess over the zero-signal theory at high
momentum is therefore attributable mainly to the mean gradient; for the
rectangular MLP matrices, anisotropy also contributes. The indicator $\delta_\beta$ of
\eqref{eq:delta-beta} is $1.0$--$1.7$ at step $400$; at step $100$ it is
$8$--$11$ and the high-momentum ratios are $1.02$--$1.05$
(Figure~\ref{fig:exp-network}c). The residual therefore persists on real
gradients and in the reference implementation, and its size differs between
two checkpoints with very different buffer signal-to-noise ratios; the two
checkpoints also differ in other respects, so this comparison does not
isolate the effect of $\delta_\beta$. None of these measurements tests the
effect of a switch on training.

\section{Discussion}\label{sec:limitations}
\paragraph{What the theory establishes.}
Under Gaussian noise, Muon's expected update near a noise-dominated optimum
is a scaled gradient step. After this response is matched, every update
contains a nonlinear residual that is uncorrelated with the noise and adds
positive semidefinite covariance; this decomposition is exact for any
Gaussian input. Under a stationary EMA or linear-filter buffer, the Hermite
expansion gives the exact accumulated covariance: momentum suppresses the
residual relative to the linear part but never removes it, and at zero
signal the covariance ratios are explicit series in the Hermite weights. In the frozen-residual
surrogate, the residual raises the stationary loss at every stable step
without changing the contraction modes, and switching to the response-matched
comparator removes the excess.

\paragraph{What is model-dependent or approximate.}
The response theorem assumes a Gaussian input law, and the temporal theorem
a stationary buffer driven by independent Gaussian innovations. The explicit
constants are for zero signal and isotropic noise; at nonzero signal the
decomposition still holds, but the response and the degree covariances are
operators that we do not evaluate in closed form. The loss results are
exact for the surrogate, which evaluates the residual on the noise-only
buffer rather than the feedback-dependent one. Simulations of the full
nonlinear recursion agree with the surrogate at small steps and preserve its
ordering at large ones, but we do not prove an approximation theorem.
Finally, replacing the comparator by ordinary momentum SGD requires an
approximately scalar response (Appendix~\ref{app:scalarity}).

\paragraph{What a practical switch still requires.}
Turning the analysis into a training procedure requires detecting the
noise-dominated regime reliably, choosing a threshold on an indicator such
as $\delta_\beta$, estimating the matched learning-rate scale robustly, and
validating the switch in full training runs at scale. The frozen-gradient
measurements show that the residual is present in the reference
implementation, but they are not a substitute for such runs. The
implication is not that Muon should be replaced by SGD in general. Once
Muon's local mean response has become effectively linear, continuing the
nonlinear orthogonalization carries an additional covariance cost;
determining when this happens during large-scale training is the main
practical question the theory leaves open.

\bibliographystyle{plainnat}
\bibliography{references}
\clearpage
\appendix
\section{Proofs for Sections~\ref{sec:regimes} and~\ref{sec:preliminaries}}\label{app:core-response}
This appendix proves Theorem~\ref{thm:response} and Proposition~\ref{prop:smoothing}, and
derives the zero-signal residual \eqref{eq:one-step-ratio}.

\paragraph{General Gaussian projection.}
Polynomial growth of $T$ makes its convolution with a nondegenerate Gaussian
smooth and square-integrable at every fixed shift. Differentiating the
Gaussian density, rather than $T$, gives for $H\in\R^d$
\[
 D\mu(y)[H]=\E[T(y+\xi)\,\xi^\top\Sigma^{-1}H].
\]
Centering $T$ does not change this identity, since $\E\xi=0$. Therefore
$\E[(T(y+\xi)-\mu(y))\xi^\top]=J(y)\Sigma$ and
$\E[R_y\xi^\top]=0$. Expanding the covariance yields
\eqref{eq:response-covariance-main}. This proves
Theorem~\ref{thm:response} for any output dimension $p$; no inverse of
$J(y)$ is used. This is an orthogonal projection in Gaussian $L^2$, not
an independence or stochastic-dominance assertion
\citep{demir2021bussgang}. The argument differentiates a Gaussian density,
so it needs no pointwise derivative of the polar map at deficient rank.

\paragraph{Smoothness and response.}
The polar factor is measurable and bounded by $\sqrt r$ in Frobenius norm.
Every derivative of a Gaussian density is integrable, so its convolution
$\Psi_\sigma$ is smooth. Oddness follows from $P(-G)=-P(G)$ and the symmetry of
$Z$. Differentiating the shifted density at the origin gives
\[
 D\Psi_\sigma(0)[H]=\sigma^{-1}\E[P(Z)\langle Z,H\rangle].
\]
Let $C=\E[\vecop P(Z)(\vecop Z)^\top]$. Independent row and column sign
changes preserve the Gaussian law and commute with $P$. They force
$C_{(i,j),(k,l)}=0$ whenever $(i,j)\ne(k,l)$: flip row $i$ if $i\ne k$,
and column $j$ otherwise. Row and column permutations make all diagonal
entries equal. Thus $C=cI_d$, where
\[
 dc=\tr C=\E\langle P(Z),Z\rangle=\E\|Z\|_*.
\]
This proves the response formula.

\paragraph{Small-signal bounds.}
Since $P$ is odd, $\Psi_\sigma$ is odd and $D^2\Psi_\sigma(0)=0$.
For matrices $A,B,C$, differentiation of the Gaussian density three times
and the Gaussian Wick isometry give
\[
 \|D^3\Psi_\sigma(Y)[A,B,C]\|_F
 \le \frac{\sqrt{6r}}{\sigma^3}\|A\|_F\|B\|_F\|C\|_F.
\]
Indeed the scalar third-order score is the degree-three Wick product of
$\langle Z,A\rangle,\langle Z,B\rangle,\langle Z,C\rangle$; its variance
is at most $3!\|A\|_F^2\|B\|_F^2\|C\|_F^2$, and $\|P\|_F\le\sqrt r$.
Taylor's formula along the line from $0$ to $Y$ gives
\[
 \left\|\Psi_\sigma(Y)-\frac c\sigma Y\right\|_F
 \le\frac{\sqrt{6r}}6\frac{\|Y\|_F^3}{\sigma^3},\qquad
 \left\|D\Psi_\sigma(Y)-\frac c\sigma I_d\right\|_{F\to F}
 \le\frac{\sqrt{6r}}2\frac{\|Y\|_F^2}{\sigma^3}.
\]
These are the bounds \eqref{eq:regimes-linear}.

\paragraph{Covariance and projection.}
The same sign and permutation argument makes the second-moment matrix
$\E[\vecop P(Z)\vecop P(Z)^\top]$ a scalar identity. Gaussian $Z$ has rank $r$ almost surely, so
$\|P(Z)\|_F^2=r$ almost surely, and the scalar is $b=r/d$. Its mean is zero
by oddness. Set $R=\vecop P(Z)-c\vecop Z$. Then $\E R=0$,
$\E[R(\vecop Z)^\top]=C-cI_d=0$, and expanding the covariance gives
$\Cov(R)=(b-c^2)I_d$. Multiplication by $\sigma/c$ proves the calibrated ratio.
Finally,
\[
 \E\|Z\|_*\le\sqrt r\,\E\|Z\|_F
 <\sqrt r\,(\E\|Z\|_F^2)^{1/2}=\sqrt{rd},
\]
where strictness follows because $\|Z\|_F$ is nonconstant. Hence
$\mathcal R_0>1$ in every finite dimension. This is an orthogonal projection
in the space of square-integrable random vectors, consistent with the
Gaussian Bussgang decomposition \citep{demir2021bussgang}.

\paragraph{Shape limits used in the main text.}
For $m=n=1$, $c=\E|Z|=\sqrt{2/\pi}$. For $m=1,n=d$,
$c=d^{-1}\E\chi_d$ and $\E\chi_d/\sqrt d\to1$, so $\mathcal R_0\to1$.
For $m=n=N$, the singular-value law of $Z/\sqrt N$ converges to the
quarter-circle density $\pi^{-1}\sqrt{4-x^2}$ on $[0,2]$
\citep{marchenko1967distribution}. Thus
\[
 \frac{\E\|Z\|_*}{N^{3/2}}\longrightarrow
 \frac1\pi\int_0^2x\sqrt{4-x^2}\,dx=\frac8{3\pi},
 \qquad \mathcal R_0\longrightarrow\frac{9\pi^2}{64}.
\]
To justify convergence of the expectation, use the eigenvalue law of
$ZZ^\top/N$. Above a truncation $L$, $\sqrt{x}\le x/\sqrt L$, and the
expected first moment of this law equals one. Bounded convergence handles
the bounded truncation; the tail is at most $1/\sqrt L$, uniformly in $N$.
These limits do not give an upper bound on finite-dimensional ratios.
\section{Proofs for Section~\ref{sec:momentum}}\label{app:momentum-proofs}
\subsection{Proof of Theorem~\ref{thm:momentum} and Corollary~\ref{cor:momentum-longrun}}
\label{app:momentum-proof}
\paragraph{Fixed-signal Hermite identity.}
Let $H_\alpha$ be the product of normalized probabilists' Hermite
polynomials, indexed by multi-indices $\alpha\in\mathbb N^d$.
Since $f_y$ is centered and square-integrable, write
$f_y=\sum_{\ell\ge1}f_{y,\ell}$ in Gaussian $L^2$, where
$f_{y,\ell}=\sum_{|\alpha|=\ell}a_\alpha H_\alpha$ and
$C_\ell(y)=\sum_{|\alpha|=\ell}a_\alpha a_\alpha^\top$.
The score identity of Theorem~\ref{thm:response} yields
$\E[f_y(X)X^\top]=\sigma_\beta J(y)$, so
$f_{y,1}(X)=\sigma_\beta J(y)X$ and
$C_1(y)=\sigma_\beta^2J(y)J(y)^\top$.
For jointly standard Gaussian $X,X'$ with cross-covariance $\rho I_d$,
the Hermite generating function gives
$\E[H_\alpha(X)H_\gamma(X')]=\boldsymbol1_{\alpha=\gamma}\rho^{|\alpha|}$
(proved as \eqref{eq:hermite-correlation} below).
Consequently
$\E[f_{y,\ell}(X)f_{y,j}(X')^\top]
=\boldsymbol1_{\ell=j}\rho^\ell C_\ell(y)$.
The identity holds for the infinite expansion by $L^2$ approximation and
Cauchy--Schwarz. Substituting $\rho=\beta^{|h|}$, summing the $K$ diagonal
and $2(K-h)$ off-diagonal terms, proves
\eqref{eq:fixed-signal-cov}. All $C_\ell(y)$ are PSD and
$\tau_K(\beta^\ell)\ge1$ for $\beta\ge0$, proving the ordering. The same
calculation shows that the degree-one and residual processes are
uncorrelated at every lag. It does not assert their independence.

\paragraph{Zero-signal specialization.}
Let $f(z)=\vecop\msign(Z)$, where $z=\vecop Z\sim N(0,I_d)$. Then
$\|f(z)\|^2=r$ almost surely and $f(-z)=-f(z)$. Let $\mathcal H_\ell$ be the
span of products of normalized probabilists' Hermite polynomials with total
degree $\ell$, and let $f_\ell$ be the coordinatewise orthogonal projection of
$f$ onto $\mathcal H_\ell$. Completeness of the Gaussian Hermite basis and
oddness give $f=\sum_{\ell\text{ odd}}f_\ell$ in $L^2$.

\paragraph{Isotropy of each degree.}
For orthogonal $A\in O(m)$, $D\in O(n)$, the transformation $Z\mapsto AZD^\top$
is orthogonal in vectorized coordinates and
$\msign(AZD^\top)=A\msign(Z)D^\top$. Total Hermite degree is preserved by
orthogonal transformations, so each $f_\ell$ inherits this equivariance.
Independent row and column sign changes force distinct matrix coordinates to
have zero covariance; row and column permutations equate their variances.
Consequently,
\begin{equation}
 \E[f_\ell(z)f_\ell(z)^\top]=w_\ell I_d,\qquad
 w_\ell\ge0,\qquad \sum_{\ell\text{ odd}}w_\ell=r/d=b.
 \label{eq:hermite-weights}
\end{equation}
The same invariance makes $\E[f(z)z^\top]$ a scalar identity. Taking its trace
and using $\langle\msign(Z),Z\rangle=\|Z\|_*$ gives
$\E[f(z)z^\top]=cI_d$. Thus $f_1(z)=cz$ and $w_1=c^2$. Some higher-degree
weight is positive: otherwise $f(z)=cz$ almost surely, contradicting its fixed
nonzero norm and the nonconstant norm of a Gaussian vector. In particular $b>c^2$.

\paragraph{Lag covariance.}
For jointly standard Gaussian $z,z'$ with $\E[zz'^\top]=\rho I_d$, the Hermite
generating function gives, for normalized product polynomials $H_\alpha$,
\begin{equation}
 \E[H_\alpha(z)H_\gamma(z')]
 =\boldsymbol 1_{\alpha=\gamma}\rho^{|\alpha|}.
 \label{eq:hermite-correlation}
\end{equation}
Indeed, the expectation of
$\exp(t^\top z-\|t\|^2/2)\exp(u^\top z'-\|u\|^2/2)$ equals
$\exp(\rho t^\top u)$; equating coefficients gives the identity. Applying it
to finite Hermite expansions and passing to the $L^2$ limit by Cauchy--Schwarz
yields
\begin{equation}
 \E[f_\ell(z)f_j(z')^\top]
 =\boldsymbol1_{\ell=j}\rho^\ell w_\ell I_d.
 \label{eq:chaos-cross-cov}
\end{equation}
At $Y=0$, $X_k=\vecop B_k/\sigma_\beta$ is stationary standard Gaussian with
cross-covariance $\beta^{|h|}I_d$ at lag $h$. Homogeneity of the polar map and
\eqref{eq:chaos-cross-cov} therefore give
\begin{equation}
 \Cov(\vecop\msign(B_k),\vecop\msign(B_{k+h}))
 =\sum_{\ell\text{ odd}}w_\ell\beta^{\ell|h|}I_d.
\end{equation}
For a stationary sequence, the covariance of its sum over $K$ times is
$K\Gamma_0+\sum_{h=1}^{K-1}(K-h)(\Gamma_h+\Gamma_h^\top)$.
Substitution proves \eqref{eq:momentum-finite-cov}--\eqref{eq:momentum-finite},
since the comparator covariance is $K\sigma_\beta^2\tau_K(\beta)I_d$.
For $\ell\ge3$, $\beta^\ell\le\beta^3$, and $\tau_K$ has nonnegative
coefficients. Summing the resulting inequalities proves
\eqref{eq:momentum-finite-bound}. When $0<\beta<1$ and $K\ge2$,
$\tau_K(\beta^\ell)<\tau_K(\beta)$ for every $\ell>1$; the positive residual
mass gives strictness. Positivity of every $\tau_K$ gives
$\mathcal R_{\beta,K}>1$ including at $K=1$ and $\beta=0$.

\paragraph{Calibration.}
Gaussian convolution gives the score derivative
$D_Y\E\msign(Y+\sigma_\beta Z)|_{Y=0}=(c/\sigma_\beta)I_d$.
This differentiation acts on the Gaussian density and does not differentiate
the polar map pointwise at deficient rank. Hence both directions in
\eqref{eq:momentum-calibration} have derivative $I_d$. The stationary law is
shifted with $Y$ in this calculation; it is not a claim about a transient
response from an initialization independent of $Y$.

\paragraph{Infinite window and monotonicity.}
For fixed $\beta<1$, the lag-covariance series is absolutely summable, since its
operator norm is at most $b\beta^{|h|}$. Thus
$\tau_K(\rho)\to(1+\rho)/(1-\rho)$ and the variance per time step converges
to the sum of lag covariances. Dominated convergence, or summing the nonnegative
degree and lag terms, proves \eqref{eq:momentum-infinite}. The comparator limit
is $\sigma_\beta^2(1+\beta)/(1-\beta)=\sigma_g^2$.
For $\ell>1$ write
\begin{equation}
 q_\ell(\beta)=\frac{1-\beta}{1+\beta}
 \frac{1+\beta^\ell}{1-\beta^\ell}.
\end{equation}
Set $\beta=e^{-x}$, $x>0$. Then
$q_\ell=\tanh(x/2)/\tanh(\ell x/2)$ and
\begin{equation}
 \frac{d}{dx}\log q_\ell
 =\frac1{\sinh x}-\frac{\ell}{\sinh(\ell x)}>0,
\end{equation}
because $\sinh(\ell x)>\ell\sinh x$ for $\ell>1,x>0$. Therefore $q_\ell$
strictly decreases with $\beta$; $q_\ell(0)=1$ by continuity. Positive residual
mass makes $\mathcal R_\beta$ strictly decreasing on $[0,1)$.
For odd $\ell\ge3$, $q_\ell\le q_3$ and
$q_3=(1-\beta+\beta^2)/(1+\beta+\beta^2)$.
Finally $q_\ell\to1/\ell$ as $\beta\uparrow1$, and $0<q_\ell\le1$.
Dominated convergence with \eqref{eq:hermite-weights} gives
\begin{equation}
 \lim_{\beta\uparrow1}\mathcal R_\beta
 =1+\sum_{\substack{\ell\ge3\\\ell\text{ odd}}}\frac{w_\ell}{\ell c^2}
 \le1+(\mathcal R_0-1)/3.
\end{equation}
This proves the corollary. \hfill$\square$
\subsection{Scalar closed form}\label{app:momentum-scalar}
For scalar standard Gaussians with correlation $\rho$,
$\E[\operatorname{sign}(Z)\operatorname{sign}(Z')]=(2/\pi)\arcsin\rho$.
For completeness, write $(Z,Z')=(A,\rho A+\sqrt{1-\rho^2}D)$ with independent
standard $A,D$. Their polar angle is uniform; the angular measure of opposite
signs is $2\arccos\rho$ out of $2\pi$. Thus the expectation is
$1-2\arccos\rho/\pi=(2/\pi)\arcsin\rho$.
Using $c^2=2/\pi$ gives
\begin{align}
 \mathcal R_{\beta,K}
 &=\frac{\pi/2+2\sum_{h=1}^{K-1}(1-h/K)\arcsin(\beta^h)}{\tau_K(\beta)},\\
 \mathcal R_\beta
 &=\frac{1-\beta}{1+\beta}
 \left[\frac\pi2+2\sum_{h\ge1}\arcsin(\beta^h)\right].
\end{align}
For $\beta=e^{-x}\uparrow1$, monotone Riemann sums for the continuous integrable
function $t\mapsto\arcsin(e^{-t})$ imply
\begin{equation}
 \lim_{\beta\uparrow1}\mathcal R_\beta
 =\int_0^\infty\arcsin(e^{-t})\,dt
 =\int_0^1\frac{\arcsin u}{u}\,du=\frac\pi2\log2.
\end{equation}
For the last equality substitute $u=\sin\theta$ and integrate by parts. If
$I=\int_0^{\pi/2}\log\sin\theta\,d\theta$, reflection gives the same integral
with cosine, so $2I=I-(\pi/2)\log2$ by the double-angle formula; hence
$I=-(\pi/2)\log2$. This also verifies the stated constant without a numerical
fit. At $\beta=0.95$, the convergent series gives $\mathcal R_\beta\approx1.09197$.

\subsection{Window length and the order of limits}\label{app:window-proof}
We prove the window claims discussed in Section~\ref{sec:momentum}
and a quantitative bound. Write $\tau_K(\rho)=\sum_{h\ge0}a_{K,h}\rho^h$ with $a_{K,0}=1$ and
$a_{K,h}=2(1-h/K)_+$ for $h\ge1$.

\emph{(i) Monotonicity in $K$.} Let $K<K'$ and $h<h'$. Then
$a_{K,h}a_{K',h'}\ge a_{K,h'}a_{K',h}$: after removing the factor
$2-\boldsymbol1_{h=0}$, which depends only on $h$, and the factor $1/K$,
which depends only on $K$, the claim reads
$(K-h)_+(K'-h')_+\ge(K-h')_+(K'-h)_+$; if $h'\ge K$ the right side vanishes,
and otherwise both sides are positive with difference $(h'-h)(K'-K)\ge0$.
For $0\le\rho<\beta<1$, pairing the terms $(h,h')$ and $(h',h)$ gives
\begin{equation}
\begin{aligned}
 \tau_K(\rho)\tau_{K'}(\beta)-\tau_K(\beta)\tau_{K'}(\rho)
 &=\sum_{h<h'}\bigl(a_{K,h}a_{K',h'}-a_{K,h'}a_{K',h}\bigr)
 \rho^h\beta^h\bigl(\beta^{h'-h}-\rho^{h'-h}\bigr)\\
 &\ge 2\Bigl(\frac1K-\frac1{K'}\Bigr)(\beta-\rho)>0,
\end{aligned}
\end{equation}
the lower bound being the $(h,h')=(0,1)$ term. Hence
$\tau_{K'}(\rho)/\tau_{K'}(\beta)<\tau_K(\rho)/\tau_K(\beta)$. Taking
$\rho=\beta^\ell$ in \eqref{eq:momentum-finite} and using the positive
residual mass shows that $\mathcal R_{\beta,K}$ strictly decreases in $K$;
its limit is $\mathcal R_\beta$ by Corollary~\ref{cor:momentum-longrun}.
For the quantitative bound, the identity
$\sum_{h=1}^{K-1}(K-h)\rho^h=[K\rho(1-\rho)-\rho(1-\rho^K)]/(1-\rho)^2$ gives
\begin{equation}
 \tau_K(\rho)=\frac{1+\rho}{1-\rho}\bigl(1-\delta_K(\rho)\bigr),\qquad
 \delta_K(\rho)=\frac{2\rho(1-\rho^K)}{K(1-\rho^2)}
 =\frac{2}{K(1+\rho)}\sum_{j=1}^K\rho^j\in\Bigl[0,\frac{2\rho}{1+\rho}\Bigr].
 \label{eq:tau-closed}
\end{equation}
Therefore $\tau_K(\beta^\ell)/\tau_K(\beta)
=q_\ell(\beta)\,(1-\delta_K(\beta^\ell))/(1-\delta_K(\beta))
\le q_\ell(\beta)/(1-\delta_K(\beta))$, and summing over $\ell$ with
weights $w_\ell/c^2$ gives
$\mathcal R_{\beta,K}-1\le(\mathcal R_\beta-1)/(1-\delta_K(\beta))$.

\emph{(ii) Fixed window, $\beta\uparrow1$.} For fixed $K$, $\tau_K$ is
continuous on $[0,1]$ with $\tau_K(1)=K$, so
$\tau_K(\beta^\ell)/\tau_K(\beta)\to1$ for every $\ell$. The ratios are at
most one, and dominated convergence gives
$\mathcal R_{\beta,K}\to1+\sum_{\ell\ge3}w_\ell/c^2=\mathcal R_0$. Since
$\mathcal R_{0,K}=\mathcal R_0$ while $\mathcal R_{\beta,K}<\mathcal R_0$ for
$0<\beta<1$ and $K\ge2$, the map $\beta\mapsto\mathcal R_{\beta,K}$ is not
monotone.
\subsection{Other filters and orthogonalization maps}\label{app:momentum-filter}
The reference implementation \citep{jordan2024muon} uses the
Nesterov buffer
\begin{equation}
 B_k=\beta B_{k-1}+(1-\beta)G_k,\qquad M_k=(1-\beta)G_k+\beta B_k,
 \label{eq:nesterov-convention}
\end{equation}
and applies to $M_k$ the NS5 map: $X_0=G/\|G\|_F$ ($X_0=0$ if $G=0$),
followed by five iterations of
\begin{equation}
 X_{j+1}=3.4445X_j-4.7750(X_jX_j^\top)X_j+2.0315(X_jX_j^\top)^2X_j,
 \label{eq:ns5-map}
\end{equation}
with tall inputs transposed.
\begin{proposition}[Filters and orthogonalization maps]\label{prop:momentum-filter}
Let $T:\R^{m\times n}\to\R^{m\times n}$ be measurable, odd, positively
scale-invariant, and equivariant under independent orthogonal row and column
transformations, with $\E\|T(Z)\|_F^2<\infty$ and
$c_T=\E\langle T(Z),Z\rangle/d\ne0$; set $b_T=\E\|T(Z)\|_F^2/d$ and
$\mathcal R_{T,0}=b_T/c_T^2$. Let $a_j\ge0$ with $\sum_{j\ge0}a_j=1$ and
\begin{equation}
 M_k=Y+\sigma_g\sum_{j\ge0}a_jZ_{k-j},\qquad
 v=\sum_{j\ge0}a_j^2,\qquad
 \rho_h=\frac1v\sum_{j\ge0}a_ja_{j+h}.
 \label{eq:general-filter}
\end{equation}
At $Y=0$ the streams $U_k=\sigma_g\sqrt v\,T(M_k)/c_T$ and $V_k=M_k$ both have mean
response $I_d$, and their accumulated covariances satisfy
\eqref{eq:momentum-finite-cov} with
\begin{equation}
 \mathcal R_{T,K}=1+\sum_{\substack{\ell\ge3\\\ell\text{ odd}}}
 \frac{w_{T,\ell}}{c_T^2}\frac{\Lambda_{\ell,K}}{\Lambda_{1,K}},\qquad
 \Lambda_{\ell,K}=1+2\sum_{h=1}^{K-1}(1-h/K)\rho_h^\ell,
 \label{eq:general-filter-ratio}
\end{equation}
where $w_{T,\ell}\ge0$, $w_{T,1}=c_T^2$, and $\sum_{\ell\text{ odd}}w_{T,\ell}=b_T$.
Hence $1<\mathcal R_{T,K}\le1+(\mathcal R_{T,0}-1)\Lambda_{3,K}/\Lambda_{1,K}\le\mathcal R_{T,0}$,
and the long-window limit replaces $\Lambda_{\ell,K}$ by
$\Lambda_\ell=1+2\sum_{h\ge1}\rho_h^\ell$ (with $\Lambda_1=1/v$). The Nesterov
filter \eqref{eq:nesterov-convention} has $a_0=1-\beta^2$,
$a_j=(1-\beta)\beta^{j+1}$ ($j\ge1$),
\[
 v=\frac{1-\beta}{1+\beta}(1+2\beta-2\beta^3),\qquad
 \rho_h=\zeta_\beta\beta^h,\qquad
 \zeta_\beta=\frac{\beta+\beta^2-\beta^3}{1+2\beta-2\beta^3},
\]
and NS5 satisfies the hypotheses with $c_T>0$.
\end{proposition}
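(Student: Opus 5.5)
The plan is to rerun the proof of Theorem~\ref{thm:momentum} and Corollary~\ref{cor:zero-signal}, replacing the EMA lag structure by the general filter and the polar map by $T$. First I will establish the stationary Gaussian structure. At $Y=0$, $M_k=\sigma_g\sum_j a_jZ_{k-j}$ is a centered Gaussian matrix with entrywise variance $\sigma_g^2v$. The normalized stream $X_k=\vecop M_k/(\sigma_g\sqrt v)$ is stationary standard Gaussian with $\E[X_kX_{k+h}^\top]=\rho_hI_d$. Here $\rho_h\in[0,1]$ because $a_j\ge0$, and Cauchy--Schwarz gives $\rho_h\le1$.

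Next I will check the calibration. By scale invariance, $T(M_k)=T(\vecop^{-1}X_k)$. The score computation of Proposition~\ref{prop:smoothing} applies verbatim, since it differentiates the Gaussian density and uses only the row/column sign-flip and permutation symmetries, which follow from orthogonal equivariance. It gives $D_Y\E T(Y+\sigma Z)|_{0}=(c_T/\sigma)I_d$, with $\sigma=\sigma_g\sqrt v$. So $U_k$ has mean response $I_d$, and $V_k$ trivially does as well.

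Then I will carry out the Hermite decomposition of $f(z)=\vecop T(\vecop^{-1}z)$, which is square-integrable by hypothesis. Oddness kills even degrees. Orthogonal invariance of Hermite degree, together with equivariance, makes each $\E[f_\ell f_\ell^\top]=w_{T,\ell}I_d$, exactly as in \eqref{eq:hermite-weights}. From $\E[f(z)z^\top]=c_TI_d$ I get $w_{T,1}=c_T^2$, and $\sum_\ell w_{T,\ell}=b_T$.

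With this in hand, the lag identity \eqref{eq:hermite-correlation} with $\rho=\rho_h$ gives
\[
\Cov\bigl(f(X_k),f(X_{k+h})\bigr)=\sum_{\ell\text{ odd}}w_{T,\ell}\,\rho_h^\ell\,I_d .
\]
Summing via \eqref{eq:sum-covariance} gives $K\sum_\ell w_{T,\ell}\Lambda_{\ell,K}I_d$. The comparator $V_k$ yields $K\sigma_g^2v\Lambda_{1,K}I_d$. Scaling $U_k$ by $\sigma_g^2v/c_T^2$ and dividing then gives \eqref{eq:general-filter-ratio}.

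The bounds come next. Since $\rho_h^\ell\le\rho_h^3$ for odd $\ell\ge3$, I get $\Lambda_{\ell,K}\le\Lambda_{3,K}\le\Lambda_{1,K}$, using $\rho_h\le1$. The strict lower bound $>1$ needs some $w_{T,\ell}>0$ with $\ell\ge3$ and all $\Lambda_{\ell,K}>0$. The former holds because otherwise $T(Z)=c_TZ$ almost surely. That contradicts scale invariance, since $c_T\ne0$ and $T(\lambda Z)=T(Z)$ would force $c_T\lambda Z=c_TZ$. The latter holds because $\Lambda_{\ell,K}\ge1$.

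For the long-window limit I need summability of $\rho_h$. The sum $\sum_h\rho_h=\bigl(\tfrac12(\sum a_j)^2-\tfrac12 v\bigr)/v+\dots$ is finite, since $\sum_{h\ge0}\sum_j a_ja_{j+h}\le(\sum a_j)^2=1$. Dominated convergence then gives $\Lambda_\ell$. In particular $\Lambda_1=1+2\sum_{h\ge1}\rho_h=(\sum_j a_j)^2/v=1/v$.

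For the Nesterov filter I will expand $M_k=(1-\beta)G_k+\beta(1-\beta)\sum_j\beta^jG_{k-j}$. This gives $a_0=(1-\beta)(1+\beta)$ and $a_j=(1-\beta)\beta^{j+1}$. Then $v$ and $\rho_h$ follow from geometric sums; I will check that $\sum_ja_ja_{j+h}$ is proportional to $\beta^h$ for $h\ge1$ and compute the constant $\zeta_\beta$.

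The main obstacle is NS5. Oddness and equivariance are immediate, since the iteration consists of odd polynomials in $X$ of the form $X(X^\top X)^k$, and the normalization by $\|G\|_F$ gives scale invariance. Square-integrability is immediate because $\|X_0\|_{\mathrm{op}}\le1$ and the iteration is bounded on bounded sets. The delicate point is $c_T>0$. Equivariance makes $T(Z)=U\,\phi(S)V^\top$, with $\phi$ a scalar polynomial applied to the singular values $s_i$ of $Z/\|Z\|_F\in(0,1]$. Hence $\langle T(Z),Z\rangle=\|Z\|_F\sum_i s_i\phi(s_i)$. I will show that the five-fold composition $\phi$ of $p(s)=3.4445s-4.775s^3+2.0315s^5$ is positive on $(0,1]$. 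For this I will verify that $p$ maps $(0,1]$ into a subinterval of $(0,\infty)$ on which $p$ stays positive. Concretely, $p$ has no roots in $(0,1.2]$ and maps $(0,1.2]$ into itself. That requires checking the maximum of $p$, which is about $1.2$ at its critical point, and the sign of $3.4445-4.775s^2+2.0315s^4$, whose discriminant is negative.
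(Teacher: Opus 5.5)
Your proposal is correct and follows the paper's proof essentially step for step. The shared steps are the Hermite weights $w_{T,\ell}$ with lag factors $\rho_h^\ell$, calibration through the Gaussian score, positive higher-degree mass from scale invariance, $1+2\sum_{h\ge1}\rho_h=1/v$, geometric sums for the Nesterov weights, and positivity of the NS5 quintic. Two small repairs are needed: $p$ reaches about $1.202$ near $s\approx0.555$, so $(0,1.2]$ is not literally invariant, but you do not need invariance at all, since your negative-discriminant observation already gives $p(s)=s\,(3.4445-4.775s^2+2.0315s^4)>0$ for every $s>0$ (exactly the paper's argument); and because $T$ is only assumed square-integrable rather than bounded, the score derivative must be justified by Cauchy--Schwarz against the shifted Gaussian likelihood ratio and its derivatives, as the paper does, rather than by citing Proposition~\ref{prop:smoothing} verbatim.
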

\begin{proof}
The series defining $M_k$ exists in $L^2$ since $\sum a_j^2\le1$. Its
standardized process is jointly Gaussian with lag correlations $\rho_h$.
Repeat the proof of Appendix~\ref{app:momentum-proof} with $f=\vecop T(Z)$ to obtain the stated
weights and covariance. Finite Gaussian second moment suffices for the score
derivative: the shifted Gaussian likelihood and its derivatives lie in $L^2$
on every bounded set of shifts, so Cauchy--Schwarz justifies differentiating
their pairing with $f$. Homogeneity gives the slope $c_T/(\sigma_g\sqrt v)$.
Nonnegative weights give $0\le\rho_h\le1$ and
\[
 1+2\sum_{h\ge1}\rho_h
 =\frac{\sum_j a_j^2+2\sum_{j<k}a_ja_k}{v}=1/v<\infty.
\]
This proves summability, the finite and infinite bounds, and the comparator's
long-run covariance $\sigma_g^2I_d$. The higher-degree mass is positive, so in fact
$\mathcal R_{T,K}>1$: if $T(Z)=c_TZ$ almost surely, then $T(w)=c_Tw$ for
Lebesgue-a.e.\ $w$ (the Gaussian law and Lebesgue measure share null sets),
so also $T(2Z)=2c_TZ$ almost surely, while positive scale invariance gives
$T(2Z)=T(Z)=c_TZ$ almost surely, contradicting
$c_T\ne0$. A strict reduction below $\mathcal R_{T,0}$ requires a lag
$h<K$ with $0<\rho_h<1$.
\end{proof}

\paragraph{Nesterov weights.}
Writing $M_k=(1-\beta)G_k+\beta B_k$ with $B_k=(1-\beta)\sum_{j\ge0}\beta^jG_{k-j}$
gives $a_0=(1-\beta)+\beta(1-\beta)=1-\beta^2$ and $a_j=(1-\beta)\beta^{j+1}$ for
$j\ge1$. Summing the geometric series gives the stated $v$ and
$\sum_ja_ja_{j+h}=v\,\zeta_\beta\beta^h$ for $h\ge1$, hence
$\rho_h=\zeta_\beta\beta^h$.

\paragraph{Finite Newton--Schulz maps.}
The NS5 map \eqref{eq:ns5-map} is a quintic polynomial approximation used in the reference Muon code
\citep{jordan2024muon}; it is not the exact polar map. Transposing tall matrices
before evaluation and transposing back is algebraically equivalent, by
associativity. Each iteration preserves oddness and orthogonal equivariance;
normalization gives positive scale invariance. A finite polynomial composition
on the unit Frobenius sphere is bounded, so the moment condition holds.
Moreover $3.4445-4.7750t^2+2.0315t^4$ is strictly positive for all real $t$,
since its minimum as a quadratic in $t^2$ is bounded below by
$3.4445-4.7750^2/(4\cdot2.0315)>0$. Each nonzero singular value therefore
remains positive. It follows that $\langle T(G),G\rangle>0$ for $G\ne0$,
and hence $c_T>0$.
\section{Proofs for Section~\ref{sec:local-consequences}}\label{app:switch-proof}
\subsection{Proof of Theorem~\ref{thm:switch-surrogate}}
\paragraph{The common state matrix.}
Write $\alpha=1-\beta$ and $J=J(y)$. Before centering, the state is
$(x_k,\widetilde b_{k-1})$. Substitution of \eqref{eq:feedback-buffer} into
\eqref{eq:local-linear-step} gives the affine recursion with constant
$a_0=(-\eta_M\mu(y),0)$ and matrices
\[
 A=\begin{pmatrix}I_d-\eta_M\alpha JH&-\eta_M\beta J\\
                    \alpha H&\beta I_d\end{pmatrix},\quad
 E_z=\begin{pmatrix}-\eta_M\alpha sJ\\\alpha sI_d\end{pmatrix},\quad
 E_e=\begin{pmatrix}-\eta_MI_d\\0\end{pmatrix}.
\]
Schur stability makes $I-A$ invertible. Both forcing terms are centered, so
both recursions have mean $(I-A)^{-1}a_0$. Subtracting it yields the theorem's
state $q_k$ and leaves $A,E_z,E_e$ unchanged. For the quadratic
$F_{\rm loc}(x)=F_0+y^\top x+x^\top Hx/2$, the expected loss at common
mean $\bar x$ is $F_{\rm loc}(\bar x)+\tr(H\Cov(x))/2$. Thus a possible
nonzero-signal bias cancels from the difference in \eqref{eq:switch-risk}.

\paragraph{Orthogonality and stationary covariance.}
For every pair $j,k$, the vectors $(n_k,z_j)$ are jointly Gaussian. The
conditional mean $\E[z_j\mid n_k]$ is therefore linear in $n_k$.
Theorem~\ref{thm:response} gives $\E e_k=0$ and $\E[e_kn_k^\top]=0$, hence
$\E[e_kz_j^\top]=\E[e_k\,\E(z_j\mid n_k)^\top]=0$. This orthogonality at all
lags does not require $e_k$ to be independent of $z_j$.

Since $A$ is Schur stable, the stationary solutions are the $L^2$ limits
\[
 q_k^L=\sum_{h\ge0}A^hE_zz_{k-1-h},\qquad
 q_k^e=\sum_{h\ge0}A^hE_ee_{k-1-h},\qquad q_k^M=q_k^L+q_k^e.
\]
The sums converge in $L^2$ because $\sum_h\|A^h\|<\infty$ and the inputs have
finite second moments. Every cross-covariance between $q_k^L$ and $q_k^e$
vanishes by the orthogonality at all lags, first for finite sums and then in
the $L^2$ limit. Hence $S_M=\Cov(q_k^L)+\Cov(q_k^e)=S_L+S_e$ with
$S_e\succeq0$. Applying the congruence $\Pi(\cdot)\Pi^\top$ and taking the
trace against $H\succeq0$ proves \eqref{eq:switch-risk}.

\paragraph{Strictness for polar updates.}
In the stated model $H\succ0$ and $\eta_M>0$. Write the stationary residual-only
state as $q_k^e=(x_k^e,b_{k-1}^e)$. If the loss gap vanishes, then
$\E[(x_k^e)^\top Hx_k^e]=0$, hence $x_k^e=0$ almost surely at every integer
time. Its buffer equation becomes $b_k^e=\beta b_{k-1}^e$. Stationarity and
$\beta<1$ imply $\E\|b_k^e\|^2=0$, and its parameter equation then gives
$\eta_Me_k=0$ almost surely. Conversely, a zero residual gives $S_e=0$.
Thus a nonzero residual necessarily contributes strictly positive loss.

For exact polar $T$, the residual is nonzero at every fixed $y$. Otherwise
$T(y+n)=\mu(y)+J(y)n$ almost surely for a nondegenerate Gaussian $n$.
The polar map is bounded, whereas any nonzero linear image of $n$ is
unbounded, so this would force $J(y)=0$ and make $T$ constant almost surely.
That is impossible: a full-support Gaussian assigns positive probability
to neighborhoods of a full-rank matrix and its negative, where the polar
map is continuous and takes distinct values. This proves strictness without
requiring a scalar Jacobian or a vanishing step size.

\paragraph{Convergence after a fixed-time switch.}
For a switch at a fixed finite time $t$, couple the switched state $\widetilde q$ with the
stationary linear solution $q^L$ through the same future innovations. Their
difference after $n$ steps is
$\widetilde q_{t+n}-q^L_{t+n}=A^n(\widetilde q_t-q^L_t)$. It tends to zero in
$L^2$ for any square-integrable retained state, so its mean and covariance
converge to those of the response-matched comparator. Convergence of the expected
quadratic loss follows as well: for its parameter coordinates, the absolute
difference of expected losses is at most
\[
 \bigl(\|y\|+\tfrac12\|H\|_{\op}
 [\|\widetilde x_{t+n}\|_{L^2}+\|x^L_{t+n}\|_{L^2}]\bigr)
 \|\widetilde x_{t+n}-x^L_{t+n}\|_{L^2}\longrightarrow0,
\]
where $\|X\|_{L^2}=(\E\|X\|^2)^{1/2}$ and both trajectories include their
common affine mean. The norms in brackets remain bounded by the same
coupling. The retained state may come from nonlinear Muon; this establishes
the destination after switching, but does not compare it with the stationary
loss of continued nonlinear Muon. No immediate loss ordering follows.
\hfill$\square$

\subsection{Proof of Theorem~\ref{thm:momentum-finite-step}}\label{app:momentum-finite-step}
Write $\eta=\eta_L$, $z_k=\vecop Z_k$, and $\epsilon_k=(\sigma_\beta/c)e_k$, so
that the surrogate is $x_{k+1}=x_k-\eta(\widetilde b_k+\epsilon_k)$ and matched
momentum SGD is the same recursion without $\epsilon_k$.

\paragraph{The residual process.}
At zero signal, $(\sigma_\beta/c)P(n_k)=(\sigma_\beta/c)f(n_k/\sigma_\beta)
=n_k+\epsilon_k$ with $\epsilon_k=(\sigma_\beta/c)\sum_{\ell\ge3}f_\ell(n_k/\sigma_\beta)$,
in the notation of Appendix~\ref{app:momentum-proof}, because $f_1(x)=cx$.
The vectors $n_k/\sigma_\beta$ and $z_j$ are jointly Gaussian with
cross-covariance a multiple of $I_d$, so \eqref{eq:chaos-cross-cov} with $j=1$
gives $\E[\epsilon_kz_j^\top]=0$ for all $k,j$, and with $\rho=\beta^{|h|}$ it
gives $\Cov(\epsilon_k,\epsilon_{k+h})=(\sigma_\beta^2/c^2)\sum_{\ell\ge3}w_\ell\beta^{\ell|h|}I_d$.
Distinct degrees are uncorrelated at all lags.

\paragraph{Mode decomposition.}
Let $H=U\diag(\lambda_i)U^\top$. All input covariances are multiples of
$I_d$, so in the coordinates $U^\top x$ the recursion decouples into $d$
scalar recursions with curvature $\lambda=\lambda_i$,
\[
 x_{k+1}=x_k-\eta(b_k+\epsilon_k),\qquad
 b_k=\beta b_{k-1}+(1-\beta)(\lambda x_k+\sigma_g\nu_k),
\]
driven by a white unit-variance $\nu_k$ and a residual $\epsilon_k$ that is
uncorrelated with $\nu$ at all lags. Eliminating $b$ gives the
characteristic polynomial $\zeta^2-A\zeta+\beta$ with
$A=1+\beta-\eta\lambda(1-\beta)$, whose roots lie in the open unit disk if and
only if $|A|<1+\beta$, that is, $0<\eta\lambda<2/v_\beta$. On this interval the
recursion has a unique causal stationary solution in $L^2$ for every
stationary finite-variance input, and its covariance is the limit from any
square-integrable initial state. Since the inputs are uncorrelated at all
lags, the output variances of the Gaussian buffer component and of each
residual degree add.

\paragraph{Response to a single covariance component.}
For one eigenmode, the response to a scalar forcing $u_k$ of covariance
$\E u_ku_{k+h}=q^{|h|}$ satisfies
\begin{equation}
 y_{k+1}=Ay_k-\beta y_{k-1}-\eta u_k+\eta\beta u_{k-1}.
 \label{eq:single-chaos-recursion}
\end{equation}
Only second moments enter the calculation; thus $u_k$ may be replaced by a
unit-variance Gaussian AR(1) process with parameter $q$ even when the actual
Hermite component is not Gaussian. Let
$C=\E[y_{k+1}u_k]$, $V=\E y_k^2$, and $G=\E[y_{k+1}y_k]$.
Causality and the AR covariance imply
$\E[y_ku_k]=qC$ and $\E[y_{k-1}u_k]=q^2C$. Multiplying
\eqref{eq:single-chaos-recursion} by $u_k$ and $y_k$ gives
\begin{equation}
 C=-\frac{\eta(1-\beta q)}{1-Aq+\beta q^2},\qquad
 (1+\beta)G=AV+\eta(\beta-q)C.
\end{equation}
Squaring the recursion, taking expectations, and eliminating $G$ yields
\begin{equation}
 \left(1-\beta^2-\frac{1-\beta}{1+\beta}A^2\right)V
 =\eta^2(1+\beta^2-2\beta q)
 +2\eta(\beta-q)\left(\frac A{1+\beta}-\beta q\right)C.
 \label{eq:single-chaos-variance}
\end{equation}
Write $t=\eta\lambda$ and $D_\beta(q,t)=1-Aq+\beta q^2$.
The variance at $q=\beta$ is
\begin{equation}
 V(\beta)=\frac{\eta^2(1+\beta)^2}
 {t(1-\beta)\{2(1+\beta)-t(1-\beta)\}}.
\end{equation}
Substitution of $C$ in \eqref{eq:single-chaos-variance}, followed by division
by $V(\beta)$, gives exactly $V(q)/V(\beta)=L_\beta(q,t)$.
The Gaussian buffer component has variance $\sigma_\beta^2=\sigma_g^2v_\beta$,
so its output variance is
$\sigma_\beta^2V(\beta)=\eta\sigma_g^2/(2\lambda-\eta v_\beta\lambda^2)$.
The degree-$\ell$ residual has variance $\sigma_\beta^2w_\ell/c^2$ and lag
parameter $q=\beta^\ell$. Distinct degrees, and the residual and the input
Gaussian component, are uncorrelated at all lags. Their output variances add.
The same argument across coordinates gives \eqref{eq:finite-step-covariance}.

\paragraph{Monotonicity and crossing.}
For $0\le q<\beta<1$, $D_\beta(q,t)>0$ and
\begin{equation}
 \partial_t L_\beta(q,t)=
 \frac{2(\beta-q)(1-\beta q)^2(1+\beta q)}
 {(1+\beta)^2D_\beta(q,t)^2}>0.
 \label{eq:finite-step-monotonicity}
\end{equation}
At the two ends of the stability interval,
\begin{equation}
 L_\beta(q,0)=\frac{1-\beta}{1+\beta}\frac{1+q}{1-q},\qquad
 \lim_{t\uparrow2/v_\beta}L_\beta(q,t)
 =\frac{1+\beta}{1-\beta}\frac{1-q}{1+q}.
\end{equation}
These values are positive and bounded above by $(1+\beta)/(1-\beta)$,
uniformly in $q\in[0,\beta]$. The summability of the Hermite weights therefore
justifies all series, continuity, and endpoint limits. Positive residual mass
gives strict monotonicity of $\mathcal R^{\rm lin}_\beta$.
The degree-$\ell$ factor crosses one at
\begin{equation}
 t_\ell=\frac{(1+\beta)(1-\beta^{\ell+1})}{1+\beta^{\ell+2}}.
 \label{eq:degree-threshold}
\end{equation}
This increases with $\ell$, with minimum $t_3$ and limit $1+\beta$.
Thus $\mathcal R^{\rm lin}_\beta(t_3)\le\mathcal R_0$, whereas
$\mathcal R^{\rm lin}_\beta(1+\beta)>\mathcal R_0$. Both points are strictly inside
the stability interval. The intermediate value theorem and strict
monotonicity give the unique crossing and its bounds. Finally
$L_\beta(\beta^\ell,0)=q_\ell(\beta)$, so
$\mathcal R^{\rm lin}_\beta(0)=\mathcal R_\beta$.

\paragraph{Loss ratio.}
By \eqref{eq:finite-step-covariance},
$\tr(H\Sigma_M)=\sum_i\omega_i\mathcal R^{\rm lin}_\beta(\eta\lambda_i)$ and
$\tr(H\Sigma_L)=\sum_i\omega_i$ with positive weights
$\omega_i=\eta\sigma_g^2/(2-\eta v_\beta\lambda_i)$, which gives the weighted-average
statement; monotonicity gives the interval, $\mathcal R^{\rm lin}_\beta\ge\mathcal R_\beta>1$
gives the strict lower bound, and continuity at $t=0$ gives the small-step
limit.

\paragraph{The case $\beta=0$.}
Then $b_k=\lambda x_k+\sigma_g\nu_k$, and both $\nu_k$ and $\epsilon_k$ are white, with
$\Var(\epsilon_k)=\sigma_g^2(\mathcal R_0-1)$ by \eqref{eq:one-step-ratio}. The scalar
recursion $x_{k+1}=(1-\eta\lambda)x_k-\eta(\sigma_g\nu_k+\epsilon_k)$ is stable for
$0<\eta\lambda<2$, and its stationary variance is
$\eta^2[\sigma_g^2+\Var(\epsilon_k)]/(1-(1-\eta\lambda)^2)$, which is $\mathcal R_0$ times the
variance without $\epsilon_k$ in every mode. \hfill$\square$

\subsection{Scalar response is an additional approximation}\label{app:scalarity}
Let $\xi$ be centered with covariance $\Sigma_B\succ0$, and fix a square
response operator $J$. A scalar comparator $a\xi$ approximates $J\xi$ in
mean square according to
\[
 \E\|(J-aI_d)\xi\|^2
 =\tr(J\Sigma_BJ^\top)-2a\tr(J\Sigma_B)+a^2\tr\Sigma_B.
\]
The minimizing scalar is
\[
 a_* =\frac{\tr(J\Sigma_B)}{\tr\Sigma_B},\qquad
 \min_a\E\|(J-aI_d)\xi\|^2
 =\tr(J\Sigma_BJ^\top)
   -\frac{\tr(J\Sigma_B)^2}{\tr\Sigma_B}.
\]
These identities follow by expanding the square and differentiating the
resulting quadratic in $a$. In contrast,
$a_{\rm eff}=\sqrt{\tr(J\Sigma_BJ^\top)/\tr\Sigma_B}$ matches only the
RMS size of $J\xi$, not its direction or sign. Zero approximation error
requires $J=a_*I_d$ on the support of the noise (on all of $\R^d$ here).
For general $J$, $a_*$ can even be nonpositive, so it need not define an
admissible positive SGD rate. A small noise-weighted approximation error
also leaves response in weak-noise curvature directions unchecked.
Finally, the exact local comparator is affine: replacing it by $a_*u$
also drops the intercept $\mu(y)-J(y)y$. Neither diagnostic by itself
certifies an optimizer switch.
\section{Experimental methods}\label{app:main-methods}
\paragraph{Maps and filters.}
$T$ is the exact polar factor (float64 SVD) or NS5 \eqref{eq:ns5-map}. EMA and
Nesterov buffers follow \eqref{eq:muon-momentum-def} and
\eqref{eq:nesterov-convention}, with weights and noise variance $\sigma_g^2v$ as in
Proposition~\ref{prop:momentum-filter}. With
$c_T=\E\langle T(Z),Z\rangle/d$, the matched Muon step for a linear rate $\eta$
is $\eta\sigma_g\sqrt v/c_T$. Synthetic runs use float64 and zero normalization
epsilon. Table~\ref{tab:momentum-constants} lists the Gaussian constants,
Table~\ref{tab:momentum-chains} the full chain grid, and
Table~\ref{tab:network-ratios} the full transformer grid.

\paragraph{Mean polar update (Figure~\ref{fig:exp-mean}).}
For $\|Y\|_F=0.1\sigma$ and a random direction $Y$, the mean update is estimated
from antithetic pairs $\tfrac12[P(Y+\sigma Z)+P(Y-\sigma Z)]$, $16{,}000$ pairs per
slope and $40{,}000$ for the entrywise panel, and projected as
$\langle\widehat\Psi_\sigma(Y),Y\rangle/\|Y\|_F^2$. The prediction $c_{N,N}/\sigma$
uses $10{,}000$ Gaussian matrices.

\paragraph{Hermite weights (Figure~\ref{fig:exp-momentum}a,b).}
For each shape and map, 16 batches draw Gaussian pairs $X$ and
$X'_\theta=\cos\theta\,X+\sin\theta\,Z$ on 97 angles in $[0,\pi/2]$ and
record the residual two-point function $\widehat D(\rho)$, $\rho=\cos\theta$,
with the degree-one part removed as a control variate; its expectation is
$\sum_{\ell\ge3}w_\ell\rho^\ell$. Nonnegative least squares on
$\{\rho^\ell\}_{\ell=3,5,\ldots,401}$ gives the weights, which enter
Corollary~\ref{cor:zero-signal} and its filter and map extensions. Batch
sizes range from $200{,}000$ ($1\times1$) to $500$ ($64\times64$); the
truncation error, checked against the exact scalar curve, is below
$3\times10^{-4}$.

\paragraph{Quadratic chains (Figure~\ref{fig:exp-momentum}c).}
$F(x)=\tfrac12x^\top Hx$ with dense $H$, eigenvalues geometrically spaced
in $[0.5,2]$, and gradient-noise scale $\sigma_g=0.2$. Muon and SGD share the filter and
each innovation within a chain pair; buffers and iterates start at zero.
Each configuration uses 32 pairs, discards $\lceil16/\eta\rceil$ steps, and
averages the loss over $\lceil150/\eta\rceil$ steps for
$\eta\in\{10^{-3},2.5\times10^{-4},6.25\times10^{-5}\}$. The grid has
scalar, $4\times4$, and $4\times16$ shapes with exact polar EMA at
$\beta\in\{0,0.5,0.9,0.95\}$; the matrix shapes add NS5 and the Nesterov
filter, in all combinations, at $\beta\in\{0.9,0.95\}$. Ratios of mean
excess losses carry delta-method intervals over chain pairs.

\paragraph{Step-size sweep (Table~\ref{tab:exp-steps}).}
$F(E)=\tfrac12\tr(E^\top PEQ)$ on $16\times16$ matrices, with $P,Q$ random
rotations of geometrically spaced eigenvalues so that the Hessian
eigenvalues lie in $[0.09,1]$, and gradient noise $\sigma_gZ_k$ with $\sigma_g=0.2$. Muon uses EMA
$\beta=0.9$ with exact polar or NS5 and step $\eta_L\sigma_\beta/c_T$;
momentum SGD uses the same buffer and step $\eta_L$. Sixty-four paired
chains start at zero, discard $8/(\eta_L\lambda_{\min})$ steps, and average
the loss over $40/(\eta_L\lambda_{\min})$ steps. The surrogate curve is
Theorem~\ref{thm:momentum-finite-step} (for exact polar) or its analogue with the
NS5 weights, evaluated with the estimated Hermite weights.

\paragraph{Frozen transformer streams (Figure~\ref{fig:exp-network}).}
A six-layer pre-LayerNorm GPT ($d_{\rm model}=256$, four heads, context
256, no biases or dropout) is trained on Tiny Shakespeare for 400 steps
with the reference Muon update (Nesterov $0.95$, bfloat16 NS5, learning
rate $0.02$ decayed linearly over the last 200 steps) on its hidden
matrices and AdamW elsewhere; the validation loss is $1.518$ at step 400
and $1.893$ at step 100. At a frozen checkpoint, $N=8192$ minibatch
gradients (16 sequences each) are stored for q, v, o, fc, and proj of
block 3 and fc of block 0. A stream with effective batch 64 averages four
gradients drawn with replacement; conditional on the pool it is i.i.d.\
with mean $\bar Y$. Each of six filters ($\beta=0$; EMA $0.5,0.9,0.95$;
Nesterov $0.9,0.95$) runs 96 chains (48 for exact polar) for 200 burn-in
and 1024 recorded steps, accumulating $T(M_k)$ and $M_k$ over
$K\in\{1,4,16,64,256\}$. Controls pass isotropic Gaussian noise, Gaussian
noise with the pool covariance, and both with $\bar Y$ added, through the
same pipeline. The indicator $\delta_\beta$ in \eqref{eq:delta-beta} is
computed at EMA $\beta=0.95$ from the unbiased split-pool estimate of
$\|Y\|_F^2$ and the pool noise covariance; at step 400 it is $1.00$, $1.74$,
$1.22$, $1.30$, $1.49$, and $1.41$ for q, v, o, fc, proj of block 3 and fc of
block 0, and at step 100 it is $10.7$, $9.7$, and $8.4$ for q and fc of block
3 and fc of block 0.

The denominator of \eqref{eq:network-main-ratio} is
$a_{\rm eff}^2\tr\Cov(\sum_kM_k)$ with $a_{\rm eff}^2=\tr(JSJ^\top)/\tr S$
and $S=\Cov(M_k)$; for a scalar linear filter of i.i.d.\ innovations,
$\Cov(\sum_kM_k)$ is proportional to $S$, so this equals
$\tr\Cov(\sum_kJM_k)$ without any isotropy assumption. $a_{\rm eff}^2$ is
estimated by central differences along centered buffer directions with two
independent buffers, common random numbers, perturbation scales $0.03$ and
$0.1$, and Richardson extrapolation; the two raw scales change
$\mathcal R^{\rm lin}_{256}$ by at most $0.01$. Intervals are
delete-one-group jackknife intervals over 16 groups of chains and condition
on the stored pool.

\begin{table}[t]
\centering
\footnotesize
\setlength{\tabcolsep}{2.5pt}
\resizebox{\linewidth}{!}{%
\begin{tabular}{llccccccc}
\toprule
Shape & Map & $\mathcal R_0$ & EMA $0.9$ & EMA $0.95$ & bound $0.95$ & EMA $\beta\uparrow1$ & Nesterov $0.95$ & $w_3$ share \\
\midrule
$1\times1$ & polar & $1.571$ & $1.097\pm0.001$ & $1.091\pm0.001 (1.092)$ & $1.191$ & $1.088\pm0.001$ & $1.062\pm0.001$ & $0.29$ \\
$1\times16$ & polar & $1.032$ & $1.010\pm0.000$ & $1.010\pm0.000$ & $1.011$ & $1.010\pm0.000$ & $1.007\pm0.000$ & $0.88$ \\
$1\times16$ & NS5 & $1.031$ & $1.010\pm0.000$ & $1.010\pm0.000$ & $1.011$ & $1.010\pm0.000$ & $1.007\pm0.000$ & $0.88$ \\
$4\times4$ & polar & $1.471$ & $1.090\pm0.000$ & $1.086\pm0.000$ & $1.157$ & $1.084\pm0.000$ & $1.059\pm0.000$ & $0.36$ \\
$4\times4$ & NS5 & $1.441$ & $1.078\pm0.000$ & $1.074\pm0.000$ & $1.147$ & $1.072\pm0.000$ & $1.051\pm0.000$ & $0.32$ \\
$4\times16$ & polar & $1.086$ & $1.026\pm0.000$ & $1.026\pm0.000$ & $1.029$ & $1.026\pm0.000$ & $1.018\pm0.000$ & $0.80$ \\
$4\times16$ & NS5 & $1.137$ & $1.034\pm0.000$ & $1.034\pm0.000$ & $1.046$ & $1.033\pm0.000$ & $1.023\pm0.000$ & $0.51$ \\
$16\times16$ & polar & $1.413$ & $1.084\pm0.000$ & $1.081\pm0.000$ & $1.138$ & $1.080\pm0.000$ & $1.056\pm0.000$ & $0.40$ \\
$16\times16$ & NS5 & $1.340$ & $1.057\pm0.000$ & $1.054\pm0.000$ & $1.114$ & $1.053\pm0.000$ & $1.036\pm0.000$ & $0.27$ \\
$64\times64$ & polar & $1.394$ & $1.083\pm0.000$ & $1.080\pm0.000$ & $1.132$ & $1.079\pm0.000$ & $1.055\pm0.000$ & $0.40$ \\
$64\times64$ & NS5 & $1.509$ & $1.103\pm0.000$ & $1.100\pm0.000$ & $1.170$ & $1.099\pm0.000$ & $1.067\pm0.000$ & $0.38$ \\
\bottomrule
\end{tabular}}
\caption{Accumulated-noise ratios from estimated Hermite weights (95\% intervals over 16 batches). `bound' is the degree-3 bound of Corollary~\ref{cor:momentum-longrun} evaluated with the estimated $\mathcal R_0$; for NS5 it is the bound of Proposition~\ref{prop:momentum-filter}. The scalar exact value is in parentheses. The $w_3$ share is $w_3/\sum_{\ell\ge3}w_\ell$.}
\label{tab:momentum-constants}
\end{table}
{\small\setlength{\tabcolsep}{4pt}
\begin{longtable}{llllcccc}
\toprule
Shape & Map & Filter & $\beta$ & $\eta$ & Predicted & Observed & RMS $\|g\|_F/(s\sqrt v)$ \\
\midrule
\endhead
$1\times1$ & polar & EMA & $0.0$ & $0.001$ & $1.571$ & $1.594\pm0.062$ & $0.03$ \\
$1\times1$ & polar & EMA & $0.0$ & $0.00025$ & $1.571$ & $1.553\pm0.052$ & $0.01$ \\
$1\times1$ & polar & EMA & $0.0$ & $6.25e-05$ & $1.571$ & $1.557\pm0.041$ & $0.01$ \\
$1\times1$ & polar & EMA & $0.5$ & $0.001$ & $1.208$ & $1.206\pm0.035$ & $0.04$ \\
$1\times1$ & polar & EMA & $0.5$ & $0.00025$ & $1.208$ & $1.226\pm0.028$ & $0.02$ \\
$1\times1$ & polar & EMA & $0.5$ & $6.25e-05$ & $1.208$ & $1.217\pm0.033$ & $0.01$ \\
$1\times1$ & polar & EMA & $0.9$ & $0.001$ & $1.098$ & $1.091\pm0.017$ & $0.10$ \\
$1\times1$ & polar & EMA & $0.9$ & $0.00025$ & $1.098$ & $1.100\pm0.022$ & $0.05$ \\
$1\times1$ & polar & EMA & $0.9$ & $6.25e-05$ & $1.098$ & $1.100\pm0.021$ & $0.03$ \\
$1\times1$ & polar & EMA & $0.95$ & $0.001$ & $1.092$ & $1.091\pm0.019$ & $0.14$ \\
$1\times1$ & polar & EMA & $0.95$ & $0.00025$ & $1.092$ & $1.091\pm0.017$ & $0.07$ \\
$1\times1$ & polar & EMA & $0.95$ & $6.25e-05$ & $1.092$ & $1.103\pm0.021$ & $0.04$ \\
$4\times4$ & polar & EMA & $0.0$ & $0.001$ & $1.472$ & $1.478\pm0.011$ & $0.11$ \\
$4\times4$ & polar & EMA & $0.0$ & $0.00025$ & $1.472$ & $1.472\pm0.013$ & $0.06$ \\
$4\times4$ & polar & EMA & $0.0$ & $6.25e-05$ & $1.472$ & $1.458\pm0.011$ & $0.03$ \\
$4\times4$ & polar & EMA & $0.5$ & $0.001$ & $1.173$ & $1.170\pm0.007$ & $0.17$ \\
$4\times4$ & polar & EMA & $0.5$ & $0.00025$ & $1.173$ & $1.173\pm0.006$ & $0.09$ \\
$4\times4$ & polar & EMA & $0.5$ & $6.25e-05$ & $1.173$ & $1.174\pm0.007$ & $0.04$ \\
$4\times4$ & polar & EMA & $0.9$ & $0.001$ & $1.090$ & $1.089\pm0.005$ & $0.43$ \\
$4\times4$ & polar & EMA & $0.9$ & $0.00025$ & $1.090$ & $1.088\pm0.005$ & $0.21$ \\
$4\times4$ & polar & EMA & $0.9$ & $6.25e-05$ & $1.090$ & $1.090\pm0.005$ & $0.11$ \\
$4\times4$ & polar & EMA & $0.95$ & $0.001$ & $1.086$ & $1.082\pm0.005$ & $0.61$ \\
$4\times4$ & polar & EMA & $0.95$ & $0.00025$ & $1.086$ & $1.084\pm0.004$ & $0.30$ \\
$4\times4$ & polar & EMA & $0.95$ & $6.25e-05$ & $1.086$ & $1.084\pm0.005$ & $0.15$ \\
$4\times4$ & polar & Nesterov & $0.9$ & $0.001$ & $1.061$ & $1.065\pm0.004$ & $0.36$ \\
$4\times4$ & polar & Nesterov & $0.9$ & $0.00025$ & $1.061$ & $1.061\pm0.004$ & $0.18$ \\
$4\times4$ & polar & Nesterov & $0.9$ & $6.25e-05$ & $1.061$ & $1.063\pm0.004$ & $0.09$ \\
$4\times4$ & NS5 & EMA & $0.9$ & $0.001$ & $1.079$ & $1.079\pm0.005$ & $0.42$ \\
$4\times4$ & NS5 & EMA & $0.9$ & $0.00025$ & $1.079$ & $1.075\pm0.004$ & $0.21$ \\
$4\times4$ & NS5 & EMA & $0.9$ & $6.25e-05$ & $1.079$ & $1.081\pm0.004$ & $0.11$ \\
$4\times4$ & NS5 & Nesterov & $0.9$ & $0.001$ & $1.055$ & $1.057\pm0.004$ & $0.36$ \\
$4\times4$ & NS5 & Nesterov & $0.9$ & $0.00025$ & $1.055$ & $1.056\pm0.004$ & $0.18$ \\
$4\times4$ & NS5 & Nesterov & $0.9$ & $6.25e-05$ & $1.055$ & $1.057\pm0.003$ & $0.09$ \\
$4\times4$ & polar & Nesterov & $0.95$ & $0.001$ & $1.059$ & $1.060\pm0.005$ & $0.55$ \\
$4\times4$ & polar & Nesterov & $0.95$ & $0.00025$ & $1.059$ & $1.059\pm0.004$ & $0.28$ \\
$4\times4$ & polar & Nesterov & $0.95$ & $6.25e-05$ & $1.059$ & $1.058\pm0.004$ & $0.14$ \\
$4\times4$ & NS5 & EMA & $0.95$ & $0.001$ & $1.075$ & $1.070\pm0.004$ & $0.61$ \\
$4\times4$ & NS5 & EMA & $0.95$ & $0.00025$ & $1.075$ & $1.074\pm0.004$ & $0.30$ \\
$4\times4$ & NS5 & EMA & $0.95$ & $6.25e-05$ & $1.075$ & $1.074\pm0.005$ & $0.15$ \\
$4\times4$ & NS5 & Nesterov & $0.95$ & $0.001$ & $1.051$ & $1.054\pm0.003$ & $0.55$ \\
$4\times4$ & NS5 & Nesterov & $0.95$ & $0.00025$ & $1.051$ & $1.051\pm0.005$ & $0.28$ \\
$4\times4$ & NS5 & Nesterov & $0.95$ & $6.25e-05$ & $1.051$ & $1.054\pm0.003$ & $0.14$ \\
$4\times16$ & polar & EMA & $0.0$ & $0.001$ & $1.085$ & $1.085\pm0.002$ & $0.19$ \\
$4\times16$ & polar & EMA & $0.0$ & $0.00025$ & $1.085$ & $1.085\pm0.002$ & $0.10$ \\
$4\times16$ & polar & EMA & $0.0$ & $6.25e-05$ & $1.085$ & $1.087\pm0.003$ & $0.05$ \\
$4\times16$ & polar & EMA & $0.5$ & $0.001$ & $1.035$ & $1.035\pm0.001$ & $0.33$ \\
$4\times16$ & polar & EMA & $0.5$ & $0.00025$ & $1.035$ & $1.035\pm0.002$ & $0.16$ \\
$4\times16$ & polar & EMA & $0.5$ & $6.25e-05$ & $1.035$ & $1.036\pm0.002$ & $0.08$ \\
$4\times16$ & polar & EMA & $0.9$ & $0.001$ & $1.026$ & $1.027\pm0.001$ & $0.82$ \\
$4\times16$ & polar & EMA & $0.9$ & $0.00025$ & $1.026$ & $1.026\pm0.001$ & $0.41$ \\
$4\times16$ & polar & EMA & $0.9$ & $6.25e-05$ & $1.026$ & $1.027\pm0.001$ & $0.21$ \\
$4\times16$ & polar & EMA & $0.95$ & $0.001$ & $1.025$ & $1.025\pm0.001$ & $1.18$ \\
$4\times16$ & polar & EMA & $0.95$ & $0.00025$ & $1.025$ & $1.026\pm0.001$ & $0.59$ \\
$4\times16$ & polar & EMA & $0.95$ & $6.25e-05$ & $1.025$ & $1.025\pm0.001$ & $0.29$ \\
$4\times16$ & polar & Nesterov & $0.9$ & $0.001$ & $1.016$ & $1.017\pm0.001$ & $0.71$ \\
$4\times16$ & polar & Nesterov & $0.9$ & $0.00025$ & $1.016$ & $1.017\pm0.001$ & $0.35$ \\
$4\times16$ & polar & Nesterov & $0.9$ & $6.25e-05$ & $1.016$ & $1.016\pm0.001$ & $0.18$ \\
$4\times16$ & NS5 & EMA & $0.9$ & $0.001$ & $1.034$ & $1.034\pm0.001$ & $0.83$ \\
$4\times16$ & NS5 & EMA & $0.9$ & $0.00025$ & $1.034$ & $1.033\pm0.001$ & $0.41$ \\
$4\times16$ & NS5 & EMA & $0.9$ & $6.25e-05$ & $1.034$ & $1.035\pm0.001$ & $0.21$ \\
$4\times16$ & NS5 & Nesterov & $0.9$ & $0.001$ & $1.021$ & $1.021\pm0.001$ & $0.71$ \\
$4\times16$ & NS5 & Nesterov & $0.9$ & $0.00025$ & $1.021$ & $1.021\pm0.001$ & $0.35$ \\
$4\times16$ & NS5 & Nesterov & $0.9$ & $6.25e-05$ & $1.021$ & $1.022\pm0.001$ & $0.18$ \\
$4\times16$ & polar & Nesterov & $0.95$ & $0.001$ & $1.018$ & $1.017\pm0.001$ & $1.08$ \\
$4\times16$ & polar & Nesterov & $0.95$ & $0.00025$ & $1.018$ & $1.019\pm0.001$ & $0.54$ \\
$4\times16$ & polar & Nesterov & $0.95$ & $6.25e-05$ & $1.018$ & $1.018\pm0.001$ & $0.27$ \\
$4\times16$ & NS5 & EMA & $0.95$ & $0.001$ & $1.033$ & $1.032\pm0.001$ & $1.18$ \\
$4\times16$ & NS5 & EMA & $0.95$ & $0.00025$ & $1.033$ & $1.034\pm0.001$ & $0.59$ \\
$4\times16$ & NS5 & EMA & $0.95$ & $6.25e-05$ & $1.033$ & $1.034\pm0.001$ & $0.30$ \\
$4\times16$ & NS5 & Nesterov & $0.95$ & $0.001$ & $1.022$ & $1.022\pm0.001$ & $1.08$ \\
$4\times16$ & NS5 & Nesterov & $0.95$ & $0.00025$ & $1.022$ & $1.022\pm0.001$ & $0.54$ \\
$4\times16$ & NS5 & Nesterov & $0.95$ & $6.25e-05$ & $1.022$ & $1.023\pm0.001$ & $0.27$ \\
\bottomrule
\caption{All moving-chain results. Observed: ratio of mean excess risks of momentum Muon and momentum SGD with the same filter, with 95\% delta-method intervals over 32 paired chains, conditional on the calibration constant. Predicted: $\mathcal R_\beta$ from Table~\ref{tab:momentum-constants} (exact for the scalar case).}
\label{tab:momentum-chains}\\
\end{longtable}}

\begin{table}[h]
\centering
\footnotesize
\setlength{\tabcolsep}{2.5pt}
\resizebox{\linewidth}{!}{%
\begin{tabular}{llcccccccc}\toprule
layer & noise & $\beta=0$ & EMA .5 & EMA .9 & EMA .95 & Nest.\ .9 & Nest.\ .95 & red.\ EMA & red.\ Nest.\ \\\midrule
L3 q & iso. Gaussian & 1.506\,$\pm$\,0.002 & 1.196\,$\pm$\,0.001 & 1.119\,$\pm$\,0.001 & 1.119\,$\pm$\,0.001 & 1.079\,$\pm$\,0.001 & 1.083\,$\pm$\,0.002 & 0.76 & 0.84 \\
L3 q & iso. Gaussian $+\bar Y$ & 1.511\,$\pm$\,0.002 & 1.202\,$\pm$\,0.001 & 1.126\,$\pm$\,0.001 & 1.118\,$\pm$\,0.001 & 1.089\,$\pm$\,0.001 & 1.089\,$\pm$\,0.001 & 0.77 & 0.83 \\
L3 q & cov.\ Gaussian & 1.537\,$\pm$\,0.006 & 1.204\,$\pm$\,0.004 & 1.118\,$\pm$\,0.003 & 1.120\,$\pm$\,0.003 & 1.078\,$\pm$\,0.003 & 1.080\,$\pm$\,0.003 & 0.78 & 0.85 \\
L3 q & cov.\ Gaussian $+\bar Y$ & 1.542\,$\pm$\,0.008 & 1.215\,$\pm$\,0.006 & 1.168\,$\pm$\,0.004 & 1.192\,$\pm$\,0.004 & 1.111\,$\pm$\,0.005 & 1.138\,$\pm$\,0.003 & 0.65 & 0.75 \\
L3 q & real & 1.426\,$\pm$\,0.005 & 1.167\,$\pm$\,0.004 & 1.162\,$\pm$\,0.004 & 1.190\,$\pm$\,0.003 & 1.099\,$\pm$\,0.004 & 1.134\,$\pm$\,0.003 & 0.55 & 0.68 \\
\midrule
L3 v & iso. Gaussian & 1.506\,$\pm$\,0.002 & 1.196\,$\pm$\,0.001 & 1.120\,$\pm$\,0.001 & 1.120\,$\pm$\,0.001 & 1.081\,$\pm$\,0.001 & 1.084\,$\pm$\,0.001 & 0.76 & 0.83 \\
L3 v & iso. Gaussian $+\bar Y$ & 1.500\,$\pm$\,0.001 & 1.189\,$\pm$\,0.001 & 1.091\,$\pm$\,0.001 & 1.107\,$\pm$\,0.001 & 1.064\,$\pm$\,0.001 & 1.074\,$\pm$\,0.001 & 0.79 & 0.85 \\
L3 v & cov.\ Gaussian & 1.524\,$\pm$\,0.013 & 1.197\,$\pm$\,0.010 & 1.119\,$\pm$\,0.007 & 1.120\,$\pm$\,0.007 & 1.074\,$\pm$\,0.007 & 1.079\,$\pm$\,0.007 & 0.77 & 0.85 \\
L3 v & cov.\ Gaussian $+\bar Y$ & 1.536\,$\pm$\,0.014 & 1.213\,$\pm$\,0.011 & 1.176\,$\pm$\,0.011 & 1.201\,$\pm$\,0.011 & 1.114\,$\pm$\,0.010 & 1.147\,$\pm$\,0.010 & 0.63 & 0.73 \\
L3 v & real & 1.445\,$\pm$\,0.017 & 1.177\,$\pm$\,0.012 & 1.165\,$\pm$\,0.011 & 1.192\,$\pm$\,0.011 & 1.102\,$\pm$\,0.011 & 1.139\,$\pm$\,0.011 & 0.57 & 0.69 \\
\midrule
L3 o & iso. Gaussian & 1.506\,$\pm$\,0.002 & 1.196\,$\pm$\,0.001 & 1.120\,$\pm$\,0.001 & 1.120\,$\pm$\,0.001 & 1.079\,$\pm$\,0.001 & 1.084\,$\pm$\,0.001 & 0.76 & 0.83 \\
L3 o & iso. Gaussian $+\bar Y$ & 1.511\,$\pm$\,0.002 & 1.201\,$\pm$\,0.001 & 1.115\,$\pm$\,0.001 & 1.108\,$\pm$\,0.001 & 1.083\,$\pm$\,0.001 & 1.078\,$\pm$\,0.001 & 0.79 & 0.85 \\
L3 o & cov.\ Gaussian & 1.526\,$\pm$\,0.005 & 1.200\,$\pm$\,0.003 & 1.118\,$\pm$\,0.003 & 1.118\,$\pm$\,0.004 & 1.077\,$\pm$\,0.003 & 1.080\,$\pm$\,0.004 & 0.78 & 0.85 \\
L3 o & cov.\ Gaussian $+\bar Y$ & 1.531\,$\pm$\,0.005 & 1.211\,$\pm$\,0.003 & 1.167\,$\pm$\,0.004 & 1.191\,$\pm$\,0.004 & 1.109\,$\pm$\,0.003 & 1.136\,$\pm$\,0.004 & 0.64 & 0.74 \\
L3 o & real & 1.421\,$\pm$\,0.005 & 1.166\,$\pm$\,0.004 & 1.160\,$\pm$\,0.003 & 1.188\,$\pm$\,0.003 & 1.098\,$\pm$\,0.003 & 1.133\,$\pm$\,0.004 & 0.55 & 0.68 \\
\midrule
L3 fc & iso. Gaussian & 1.090\,$\pm$\,0.000 & 1.037\,$\pm$\,0.000 & 1.026\,$\pm$\,0.000 & 1.026\,$\pm$\,0.000 & 1.017\,$\pm$\,0.000 & 1.019\,$\pm$\,0.000 & 0.71 & 0.79 \\
L3 fc & iso. Gaussian $+\bar Y$ & 1.091\,$\pm$\,0.000 & 1.037\,$\pm$\,0.000 & 1.037\,$\pm$\,0.000 & 1.051\,$\pm$\,0.000 & 1.021\,$\pm$\,0.000 & 1.038\,$\pm$\,0.000 & 0.44 & 0.58 \\
L3 fc & cov.\ Gaussian & 1.141\,$\pm$\,0.002 & 1.060\,$\pm$\,0.001 & 1.045\,$\pm$\,0.002 & 1.045\,$\pm$\,0.002 & 1.030\,$\pm$\,0.002 & 1.033\,$\pm$\,0.002 & 0.68 & 0.77 \\
L3 fc & cov.\ Gaussian $+\bar Y$ & 1.141\,$\pm$\,0.001 & 1.062\,$\pm$\,0.001 & 1.058\,$\pm$\,0.001 & 1.058\,$\pm$\,0.001 & 1.040\,$\pm$\,0.001 & 1.047\,$\pm$\,0.001 & 0.59 & 0.66 \\
L3 fc & real & 1.076\,$\pm$\,0.003 & 1.032\,$\pm$\,0.002 & 1.054\,$\pm$\,0.003 & 1.057\,$\pm$\,0.003 & 1.033\,$\pm$\,0.003 & 1.045\,$\pm$\,0.003 & 0.25 & 0.41 \\
\midrule
L3 proj & iso. Gaussian & 1.090\,$\pm$\,0.000 & 1.037\,$\pm$\,0.000 & 1.026\,$\pm$\,0.000 & 1.026\,$\pm$\,0.000 & 1.017\,$\pm$\,0.000 & 1.019\,$\pm$\,0.000 & 0.71 & 0.79 \\
L3 proj & iso. Gaussian $+\bar Y$ & 1.090\,$\pm$\,0.000 & 1.035\,$\pm$\,0.000 & 1.042\,$\pm$\,0.000 & 1.047\,$\pm$\,0.000 & 1.023\,$\pm$\,0.000 & 1.038\,$\pm$\,0.000 & 0.48 & 0.57 \\
L3 proj & cov.\ Gaussian & 1.170\,$\pm$\,0.003 & 1.067\,$\pm$\,0.003 & 1.048\,$\pm$\,0.003 & 1.050\,$\pm$\,0.003 & 1.030\,$\pm$\,0.003 & 1.035\,$\pm$\,0.003 & 0.71 & 0.80 \\
L3 proj & cov.\ Gaussian $+\bar Y$ & 1.175\,$\pm$\,0.003 & 1.077\,$\pm$\,0.003 & 1.069\,$\pm$\,0.003 & 1.068\,$\pm$\,0.003 & 1.049\,$\pm$\,0.003 & 1.055\,$\pm$\,0.003 & 0.61 & 0.69 \\
L3 proj & real & 1.121\,$\pm$\,0.004 & 1.051\,$\pm$\,0.004 & 1.063\,$\pm$\,0.004 & 1.065\,$\pm$\,0.003 & 1.040\,$\pm$\,0.004 & 1.052\,$\pm$\,0.003 & 0.46 & 0.58 \\
\midrule
L0 fc & iso. Gaussian & 1.091\,$\pm$\,0.000 & 1.037\,$\pm$\,0.000 & 1.026\,$\pm$\,0.000 & 1.026\,$\pm$\,0.000 & 1.017\,$\pm$\,0.000 & 1.019\,$\pm$\,0.000 & 0.71 & 0.79 \\
L0 fc & iso. Gaussian $+\bar Y$ & 1.089\,$\pm$\,0.000 & 1.036\,$\pm$\,0.000 & 1.040\,$\pm$\,0.000 & 1.041\,$\pm$\,0.000 & 1.022\,$\pm$\,0.000 & 1.034\,$\pm$\,0.000 & 0.54 & 0.62 \\
L0 fc & cov.\ Gaussian & 1.149\,$\pm$\,0.004 & 1.063\,$\pm$\,0.004 & 1.048\,$\pm$\,0.003 & 1.049\,$\pm$\,0.003 & 1.031\,$\pm$\,0.003 & 1.036\,$\pm$\,0.003 & 0.67 & 0.76 \\
L0 fc & cov.\ Gaussian $+\bar Y$ & 1.150\,$\pm$\,0.004 & 1.067\,$\pm$\,0.003 & 1.062\,$\pm$\,0.003 & 1.060\,$\pm$\,0.003 & 1.044\,$\pm$\,0.003 & 1.050\,$\pm$\,0.003 & 0.60 & 0.67 \\
L0 fc & real & 1.105\,$\pm$\,0.005 & 1.045\,$\pm$\,0.004 & 1.058\,$\pm$\,0.003 & 1.058\,$\pm$\,0.003 & 1.037\,$\pm$\,0.004 & 1.047\,$\pm$\,0.003 & 0.45 & 0.55 \\
\bottomrule\end{tabular}}
\caption{$\mathcal R^{\rm lin}_{256}$ at the step-400 checkpoint, effective batch 64, NS5 in float32, for five noise sources and six filters. The last two columns give the reduction fraction $(\mathcal R_0-\mathcal R_\beta)/(\mathcal R_0-1)$ for EMA $0.95$ and Nesterov $0.95$. Intervals are 95\% jackknife intervals over 16 chain groups.}
\label{tab:network-ratios}
\end{table}

\clearpage
\section*{Use of AI tools}
AI tools were used in editing and proofreading the paper, in checking the
proofs, and in implementing the experiment code. The author is responsible for the content of the paper.
\end{document}